\newif\ifdraft
\newif\ifmain
\newif\ifsupl
\definecolor{light-red}{rgb}{1,0.8,0.85}
\definecolor{light-green}{rgb}{0.85,1,0.8}
\definecolor{light-blue}{rgb}{0.8,0.85,1}
\NewTableCommand{\first}{\SetCell{light-red}}
\NewTableCommand{\second}{\SetCell{light-green}}
\NewTableCommand{\third}{\SetCell{light-blue}}
\NewTableCommand{\bold}{\SetCell{font=\bfseries}}
\DeclareMathOperator*{\argmax}{argmax}
\DeclareMathOperator*{\argmin}{argmin}
\renewcommand\paragraph{\@startsection{paragraph}{4}{\z@}%
                        {-0.7ex \@plus -0.2ex \@minus -0.1ex}%
                        {-0.5em \@plus -0.2em \@minus -0.1em}%
                        {\normalfont\normalsize\bfseries}}
\let\oldparagraph\paragraph
\renewcommand\paragraph[1]{\oldparagraph{#1.}}
\newcommand{\diff}{\mathrm{d}}
\newcommand{\tablefontsize}{\fontsize{5.5pt}{6.5pt}\selectfont}
\crefname{section}{Sec.}{Secs.}
\Crefname{section}{Section}{Sections}
\crefname{table}{Tab.}{Tabs.}
\Crefname{table}{Table}{Tables}
\crefname{theorem}{Thm.}{Thms.}
\Crefname{theorem}{Theorem}{Theorems}
\crefname{proposition}{Prop.}{Props.}
\Crefname{proposition}{Proposition}{Propositions}
\begin{document}
\pagestyle{headings}
\mainmatter
\def\ACCVSubNumber{776} 

\title{Deep Point-to-Plane Registration by Efficient Backpropagation for Error Minimizing Function} 

\ifdraft
  \titlerunning{ACCV-22 submission ID \ACCVSubNumber}
  \authorrunning{ACCV-22 submission ID \ACCVSubNumber}
  \author{Anonymous ACCV submission}
  \institute{Paper ID \ACCVSubNumber}
\else
  \titlerunning{Deep Point-to-Plane Registration}
  \author{Tatsuya Yatagawa
    \and Yutaka Ohtake \and Hiromasa Suzuki}
  \authorrunning{T. Yatagawa et al.}
  %
  \institute{School of Engineering, The University of Tokyo, Tokyo, Japan
    \email{\{tatsy,ohtake,suzuki\}@den.t.u-tokyo.ac.jp}}
\fi

\ifmain

\maketitle

\begin{abstract}
  Traditional algorithms of point set registration minimizing point-to-plane distances often achieve a better estimation of rigid transformation than those minimizing point-to-point distances. Nevertheless, recent deep-learning-based methods minimize the point-to-point distances. In contrast to these methods, this paper proposes the first deep-learning-based approach to point-to-plane registration. A challenging part of this problem is that a typical solution for point-to-plane registration requires an iterative process of accumulating small transformations obtained by minimizing a linearized energy function. The iteration significantly increases the size of the computation graph needed for backpropagation and can slow down both forward and backward network evaluations. To solve this problem, we consider the estimated rigid transformation as a function of input point clouds and derive its analytic gradients using the implicit function theorem. The analytic gradient that we introduce is independent of how the error minimizing function (i.e., the rigid transformation) is obtained, thus allowing us to calculate both the rigid transformation and its gradient efficiently. We implement the proposed point-to-plane registration module over several previous methods that minimize point-to-point distances and demonstrate that the extensions outperform the base methods even with point clouds with noise and low-quality point normals estimated with local point distributions.
  \keywords{point-to-plane registraion; efficient backpropagation; analytic gradient; implicit function theorem}
\end{abstract}

\section{Introduction}
\label{sec:intro}

\begin{figure}[t]
  \centering
  \includegraphics[width=\linewidth]{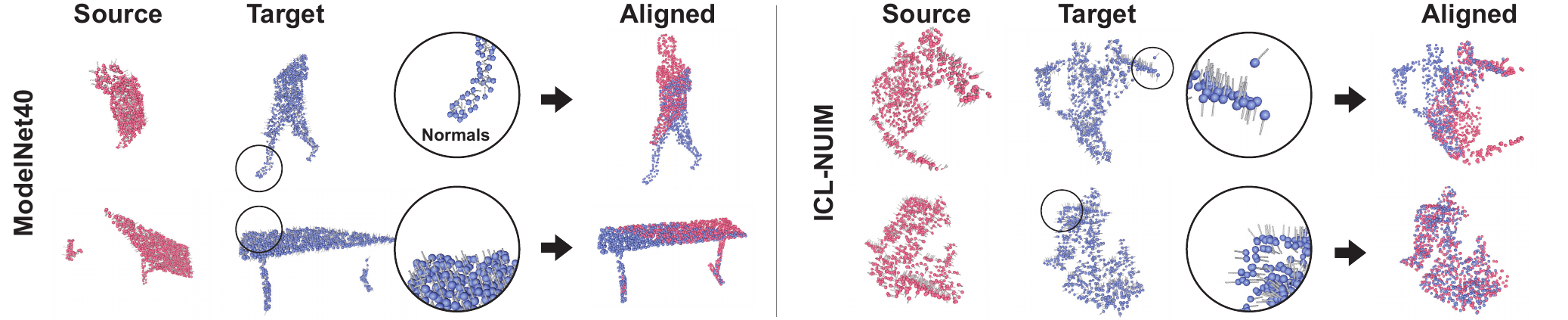}
  \caption{Our point-to-plane registration module with efficient backpropagation improves previous deep-learning-based point set registration methods. Not only for point clouds with correct point normals (left: ModelNet40), our method performs well for those with estimated normals, which may be randomly flipped (right: ICL-NUIM).}
  \label{fig:teaser}
\end{figure}

Point set registration is a fundamental problem in computer vision and graphics, and is the core of many applications, such as 3D object matching, simultaneous localization and mapping (SLAM), and dense 3D reconstruction. The iterative closest point (ICP)~\cite{besl1992method,chen1991object,segal2009generalized,yang2013goicp,rusinkiewicz2019symmetric} is one of the most popular algorithms for the registration. The standard ICP algorithm defines the correspondences by associating each point in a source point cloud with its nearest neighbor in the target point cloud. Then, it obtains a rigid transformation to minimize the distances between corresponding points. However, the traditional ICP algorithms using the distance-based point correspondences suffer from the non-convexity of the objective function and often result in a wrong registration, depending on the initial postures of input point clouds.

Recently, deep learning techniques on point clouds have provided purpose-specific point feature descriptors and have also been leveraged to define better point correspondences~\cite{aoki2019pointnetlk,wang2019deep,sarode2019pcrnet} for the registration, where the point-to-point correspondences are defined based on the vicinity of point features obtained by a neural network. In addition, several studies have been conducted to find better point-to-point correspondences between incomplete point clouds~\cite{wang2019prnet,yew2020rpmnet} or to apply deep learning to more advanced registration algorithms~\cite{yuan2020deepgmr,yew2020rpmnet}. However, most of these methods are based on the Procrustes algorithm~\cite{gower1975generalized}, which uses singular value decomposition (SVD) to minimize the sum of pairwise point-to-point distances. The differentiable SVD layer, which is provided by most machine learning libraries such as PyTorch and TensorFlow, is leveraged by the previous methods to train the network in an end-to-end manner. However, in the context of traditional point set registration, an algorithm minimizing point-to-\textit{plane} distances rather than point-to-\textit{point} distances often obtains a better rigid transformation~\cite{rusinkiewicz2001efficient}.

This fact motivated us to investigate a deep-learning technique for point-to-plane registration. A challenging part of using point-to-plane registration with deep learning is that a typical solution for the point-to-plane registration is based on an iterative accumulation of transformation matrices by minimizing a linearly approximated energy function~\cite{chen1991object}. The iterative process slows down the training process because it significantly increases the size of the computation graph needed to compute the gradients of network parameters via auto-differentiation. To address this problem, we focus on the gradient of a function defined by error minimization, which appears in point-to-plane and other standard registration algorithms. The proposed method applies the implicit function theorem to derive the analytic gradients of the rigid transformation with respect to positions and normals of input points. Recently, a similar idea has been proposed to solve the perspective-n-points (PnP) problem~\cite{chen2020end}, but we focus more on the solution to the point-to-plane registration problem defined as constrained error minimization.

This paper has three major contributions. First, we introduce a proposition to obtain a necessary condition that an analytic gradient exists for a function defined by constrained error minimization. We will show the proof of the proposition using the implicit function theorem. Second, we apply the proposition to calculate the gradients practically for the point-to-plane registration problem, which is typically formulated as constrained optimization. The analytic gradient given by our method is agnostic to how the error minimizing function is obtained. Therefore, the large computation graph is unnecessary to compute the gradient of output rigid transformation with respect to input point data. Third, we plug our point-to-plane registration module into previous methods, which originally minimized point-to-point distances. This experiment demonstrates that our extensions outperform the base methods on the evaluations with standard point cloud datasets, even with low-quality point normals estimated from local point distributions.

\section{Related Work}
\label{sec:related-work}

\paragraph{Traditional point set registration}

The classic ICP algorithms obtain a good rigid transformation to align two point clouds by alternating between finding point correspondences and calculating the rigid transformation that minimizes point-to-point distances~\cite{besl1992method}, point-to-plane distances~\cite{chen1991object,rusinkiewicz2001efficient}, and others~\cite{segal2009generalized,rusinkiewicz2019symmetric}. The vicinity of two point cloud is defined traditionally by using distances of points, but also by using probability distributions to handle noisy and partial point clouds~\cite{gabriel2016point,jian2011robust,eckart2018hgmr}. A common problem of the traditional ICP is the non-convexity of the objective function, which makes it difficult to find appropriate point correspondences. Go-ICP~\cite{yang2013goicp} uses the branch-and-bound method to find a globally optimal rotation matrix in $SO(3)$ but is several orders of magnitude slower than standard ICP algorithms even with the help of local ICP to accelerate the search process. Other approaches have solved the constrained nonlinear problem with more sophisticated solvers~\cite{fitzgibbon2003robust,maron2016point,izatt2020globally,yang2019polynomial} (e.g., Levenberg--Marquardt method).
However, these methods based on distances of points can often suffer from input point clouds with significantly different initial postures.

\paragraph{Feature-based point set registration}

Rather than using point positions, various previous studies have leveraged feature vectors for an input point cloud to resolve the problem of bad initial postures. Traditional hand-crafted features summarize local geometric properties, which are often described by positions and normals, into a histogram. For instance, the variety of normal orientations~\cite{tombari2010shot} and geometric relationship between point and normal pairs~\cite{rusu2008aligning,rusu2009fpfh,drost2010model,choi2012voting} have been employed. Recently, such traditional feature extractors have been replaced by deep convolutional neural networks (CNNs)~\cite{zeng20173dmatch}. For instance, PPFNet~\cite{deng2018ppfnet} and PPF-FoldNet~\cite{deng2018ppf} employ PointNet~\cite{qi2017pointnet} and FoldingNet~\cite{yang2018foldingnet} (i.e., the CNNs on a point cloud), respectively. Some other methods apply a 3D fully-convolutional network to small 3D volumes, which are defined for local point subsets~\cite{gojcic2019perfect,choy2019fully}.

\paragraph{End-to-end learning of point set registration}

End-to-end deep learning frameworks are another trend in point set registration. PointNetLK~\cite{aoki2019pointnetlk} introduces a numerically differentiable Lucas--Kanade algorithm for minimizing the distance of point features obtained by PointNet~\cite{qi2017pointnet}. An analytically differentiable variant~\cite{li2021pointnetlk} has also been developed recently. PCRNet~\cite{sarode2019pcrnet} encodes input point clouds using PointNet but directly estimates the rigid transformation between them by the network itself. Deep closest point (DCP)~\cite{wang2019deep} applies attention-based soft correspondences between input point clouds, which is later extended for partial point clouds by PRNet~\cite{wang2019prnet}. Following these seminal approaches, several studies have extended traditional registration algorithms using neural networks. For example, RPMNet~\cite{yew2020rpmnet} has extended robust point matching~\cite{gold1998rpm} to align partially overlapping point clouds. DeepGMR~\cite{yuan2020deepgmr} has extended Gaussian mixture registration~\cite{jian2011robust} for memory-efficient registration. Furthermore, a couple of studies have emphasized more reliable point pairs by weighting more heavily with an additional CNN~\cite{choy2020deep}, supervising the extraction of overlapping regions~\cite{huang2021predator}, and extracting them with Hough voting~\cite{lee2021deep}.

\section{Deep point-to-plane registration}
\label{sec:plane-icp}

We assume that point clouds $\mathcal{X}$ and $\mathcal{Y}$ have point normals. Hence, we denote $\mathcal{X} = \{ (\mathbf{x}_i, \mathbf{m}_i )\}_{i=1}^N$ and $\mathcal{Y} = \{ (\mathbf{y}_j, \mathbf{n}_j) \}_{j=1}^M$, where $\mathbf{x}_i$ and $\mathbf{y}_j$ denotes $i$th and $j$th vertex positions, $\mathbf{m}_i$ and $\mathbf{n}_j$ are their point normals, and $N$ and $M$ are the number of points in $\mathcal{X}$ and $\mathcal{Y}$, respectively. For simplicity, we assume points in $\mathcal{X}$ and $\mathcal{Y}$ are already paired. As shown later, our method obtains the point correspondences by a neural network as the previous studies do. Then, we align $(\mathbf{x}_i, \mathbf{m}_i)$ and $(\mathbf{y}_i, \mathbf{n}_i)$ for $i = 1, \ldots, N$ by minimizing point-to-plane distances.

\subsection{Point-to-plane registration by linear approximation}
\label{ssec:linearized-rotation}

In the point-to-plane registration~\cite{rusinkiewicz2001efficient}, the error function $E(\mathbf{R}, \mathbf{t})$ for rigid transformation $(\mathbf{R}, \mathbf{t})$ is defined for corresponding point pairs $\{ (\mathbf{x}_i, (\mathbf{y}_i, \mathbf{n}_i)) \}_{i = 1}^N$:
\begin{equation}
  E(\mathbf{R}, \mathbf{t}) = \sum_{i=1}^N \left( (\mathbf{R} \mathbf{x}_i + \mathbf{t} - \mathbf{y}_i) \cdot \mathbf{n}_i \right)^2.
  \label{eq:plane-icp-energy}
\end{equation}
Minimizing this error function requires solving a complicated constrained non-linear problem due to the orthogonality of rotation matrix $\mathbf{R} \in SO(3)$. In contrast, we can simplify the problem when we assume the amount of rotation is relatively small, namely $\theta \approx 0$. In this case, rotation matrix $\mathbf{R}(\theta, \mathbf{w})$ for the rotation around axis $\mathbf{w}$ by angle $\theta$ can be simplified to a skew symmetric matrix using the Rodrigues' rotation formula.
\begin{equation}
  \mathbf{R}(\theta, \mathbf{w}) = \mathbf{I} + \sin\theta \mathbf{K}(\mathbf{w}) + (1 - \cos\theta) \mathbf{K}^2(\mathbf{w})
  \approx \mathbf{I} + \mathbf{K}(\theta \mathbf{w})
  \label{eq:linearized-rotation}
\end{equation}
where $\mathbf{K}(\mathbf{w}) \in \mathbb{R}^{3 \times 3}$ is a matrix corresponding to the cross product with $\mathbf{w}$ (i.e., $\mathbf{K}(\mathbf{w}) \mathbf{v} = \mathbf{w}\times\mathbf{v}$). Then, the problem to minimize \cref{eq:plane-icp-energy} can also be simplified as a linear system~\cite{chen1991object}, and we can obtain the axis-angle representation of rigid transformation by solving the system.
\begin{gather}
  \mathbf{A}
  \begin{bmatrix}
    \mathbf{a} \\
    \mathbf{t}
  \end{bmatrix} \!=\! \mathbf{b}, \;
  \mathbf{A} \!=\! \sum_{i=1}^{N}
  \begin{bmatrix}
      \mathbf{x}_i \!\times\! \mathbf{n}_i \\
      \mathbf{n}_i
  \end{bmatrix}\!
  \begin{bmatrix}
      \mathbf{x}_i \!\times\! \mathbf{n}_i \\
      \mathbf{n}_i
  \end{bmatrix}^\top, \; 
  \mathbf{b} \!=\! \sum_{i=1}^N
  \begin{bmatrix}
    \mathbf{x}_i \!\times\! \mathbf{n}_i \\
    \mathbf{n}_i
  \end{bmatrix}\!
  (\mathbf{y}_i - \mathbf{x}_i) \cdot \mathbf{n}_i, \label{eq:matrix-inverse} 
\end{gather}
where $\mathbf{a} = \theta\mathbf{w}$. Finally, we re-calculate a rotation matrix by the Rodrigues' formula using angle $\theta = \| \mathbf{a} \|_2$ and axis $\mathbf{w} = \mathbf{a} / \theta$ without the small-angle assumption. The rotation matrix obtained by $\theta$ and $\mathbf{w}$ is an approximation and does not minimize \cref{eq:plane-icp-energy}. Therefore, to obtain the optimal transformation $(\mathbf{R}^*, \mathbf{t}^*)$, we need to accumulate the approximated rotation matrices and translation vectors by repeating this process several times. For simplicity, we henceforth refer to this solution of accumulating small transformations as the \textit{iterative accumulation}.

\subsection{Gradient of error minimizing function}
\label{ssec:gradient}

When an algorithm to solve point-to-plane registration obtains the same rigid transformation from the same input consistently, we can regard $(\mathbf{R}^{*}, \mathbf{t}^{*})$ as a function of $\mathcal{X}$ and $\mathcal{Y}$. When the registration is used as a component of a neural network, the computation will be speeded-up significantly if we can calculate the analytic gradient of the function. However, the evaluation of the function involves the minimization of the error function in \cref{eq:plane-icp-energy}, and the definition of the gradient is not straightforward. Here, we leverage the implicit function theorem to derive the analytic gradient of such an energy minimizing function.

\begin{theorem}[Implicit function theorem]
  \label{thm:implicit}
  Let $g: \mathbb{R}^{n+m} \rightarrow \mathbb{R}^m$ be a real-valued function of the class $C^1$, and let $\mathbb{R}^{n+m}$ have coordinates $(\mathbf{x}, \mathbf{y})$. Fix a point $(\mathbf{x}_0, \mathbf{y}_0)$ with $g(\mathbf{x}_0, \mathbf{y}_0) = \mathbf{0}$, where $\mathbf{0} \in \mathbb{R}^m$ be the zero vector. If $\frac{\partial g(\mathbf{x}_0, \mathbf{y}_0)}{\partial \mathbf{y}}$, i.e., the Jacobian matrix of $g$ with $\mathbf{y}$ at $(\mathbf{x}_0, \mathbf{y}_0)$, is invertible, then there exist open sets $U \subset \mathbb{R}^n$ and $V \subset \mathbb{R}^m$ respectively including $\mathbf{x}_0$ and $\mathbf{y}_0$ such that there exists a unique real-valued function $h: U \rightarrow V$ of the class $C^1$ that satisfies $h(\mathbf{x}_0) = \mathbf{y}_0$ and $g(\mathbf{x}_0, h(\mathbf{x}_0)) = \mathbf{0}$ for all $\mathbf{x} \in U$. Moreover, the gradient of $h$ in $U$ are given by the matrix product $\frac{\diff h(\mathbf{x})}{\diff \mathbf{x}} = - \left( \frac{\partial g(\mathbf{x}, g(\mathbf{x}))}{\partial \mathbf{y}} \right)^{-1} \frac{\partial g(\mathbf{x}, g(\mathbf{x}))}{\partial \mathbf{x}}$.
\end{theorem}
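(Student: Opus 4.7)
The plan is to reduce the existence of $h$ to a Banach fixed-point problem and then extract its regularity from the contraction estimate. Write $\mathbf{A} := \frac{\partial g(\mathbf{x}_0,\mathbf{y}_0)}{\partial \mathbf{y}}$, which is invertible by hypothesis, and for each $\mathbf{x}$ near $\mathbf{x}_0$ define the map $T_\mathbf{x}(\mathbf{y}) := \mathbf{y} - \mathbf{A}^{-1} g(\mathbf{x},\mathbf{y})$. Fixed points of $T_\mathbf{x}$ are exactly the zeros of $g(\mathbf{x},\cdot)$, so constructing $h$ reduces to producing a unique fixed point depending continuously on $\mathbf{x}$.

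First I would choose a closed ball $\overline{V}$ around $\mathbf{y}_0$ small enough that $\|\mathbf{I} - \mathbf{A}^{-1}\,\partial g(\mathbf{x}_0,\mathbf{y})/\partial \mathbf{y}\| \le 1/2$ throughout $\overline{V}$, which is possible because $g$ is $C^1$ and the expression vanishes at $\mathbf{y}_0$. By the mean value inequality, $T_{\mathbf{x}_0}$ is then a contraction with factor $1/2$ on $\overline{V}$. Next I would shrink the $\mathbf{x}$-neighborhood to an open set $U$ so that two things continue to hold: the derivative bound still yields a uniform contraction factor $1/2$ for every $T_\mathbf{x}$ with $\mathbf{x} \in U$, and $\|T_\mathbf{x}(\mathbf{y}_0)-\mathbf{y}_0\| = \|\mathbf{A}^{-1} g(\mathbf{x},\mathbf{y}_0)\|$ is small enough that $T_\mathbf{x}$ maps $\overline{V}$ into itself. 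Banach's fixed-point theorem then produces a unique $h(\mathbf{x}) \in \overline{V}$ with $g(\mathbf{x}, h(\mathbf{x})) = \mathbf{0}$, and the a priori estimate comparing $T_\mathbf{x}$ and $T_{\mathbf{x}'}$ at their respective fixed points, combined with continuity of $g$ in $\mathbf{x}$, delivers continuity of $h$ on $U$.

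For the $C^1$ conclusion and the gradient formula, I would first upgrade invertibility of $\partial g/\partial \mathbf{y}$ from the base point to the whole product $U \times V$ using openness of $GL(m,\mathbb{R})$. Differentiability of $h$ at an arbitrary $\mathbf{x}\in U$ then follows by expanding $g(\mathbf{x}+\mathbf{k}, h(\mathbf{x})+\mathbf{v}) - g(\mathbf{x},h(\mathbf{x})) = \frac{\partial g}{\partial \mathbf{x}}\mathbf{k} + \frac{\partial g}{\partial \mathbf{y}}\mathbf{v} + o(\|\mathbf{k}\|+\|\mathbf{v}\|)$, setting the left-hand side to zero with $\mathbf{v} = h(\mathbf{x}+\mathbf{k}) - h(\mathbf{x})$, and solving to get $\mathbf{v} = -\bigl(\partial g/\partial \mathbf{y}\bigr)^{-1}\,\partial g/\partial \mathbf{x}\,\mathbf{k} + o(\|\mathbf{k}\|)$. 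This simultaneously identifies the Jacobian and yields the claimed formula, and continuity of the partials of $g$ transfers through the matrix inverse to give continuity of $dh/d\mathbf{x}$. The delicate point, and the one I expect to be the main obstacle, is justifying $\|\mathbf{v}\| = O(\|\mathbf{k}\|)$ before absorbing it into the little-$o$ remainder; this requires combining continuity of $h$ from the fixed-point step with the uniform invertibility of $\partial g/\partial \mathbf{y}$ on a small enough neighborhood, after which the remaining manipulations are routine.
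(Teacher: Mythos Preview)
Your proof sketch is the standard Banach fixed-point approach to the implicit function theorem and is essentially correct. However, the paper does not actually prove this theorem: it is stated as the classical implicit function theorem without proof, and is then invoked as a black box to establish Proposition~\ref{prop:diffargmin} (the gradient of a function minimizer). So there is no ``paper's own proof'' to compare against here; the authors simply cite the result.

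That said, your argument is sound. The contraction construction via $T_\mathbf{x}(\mathbf{y}) = \mathbf{y} - \mathbf{A}^{-1} g(\mathbf{x},\mathbf{y})$, the two-stage shrinking of neighborhoods to secure both the contraction factor and self-mapping of $\overline{V}$, and the extraction of differentiability from the first-order expansion are all correct and standard. You have also correctly flagged the one genuinely nontrivial step, namely the a~priori bound $\|\mathbf{v}\| = O(\|\mathbf{k}\|)$ needed before the little-$o$ manipulation; this is indeed handled by the uniform invertibility of $\partial g/\partial \mathbf{y}$ on the restricted neighborhood, exactly as you say. If you were submitting this as a self-contained proof, the only polish needed would be to spell out that last Lipschitz estimate explicitly rather than leaving it as a remark.
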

This theorem says that we can differentiate function $h(\mathbf{x})$, which is given by solving $g(\mathbf{x}_0, \mathbf{y}_0) = \mathbf{0}$ with $\mathbf{x}_0$, when $g(\mathbf{x}, \mathbf{y})$ is differentiable near the point $(\mathbf{x}_0, \mathbf{y}_0)$. By applying this theorem to the gradient of a real-valued scalar function, which we denote as $f(\mathbf{x}, \mathbf{y})$, we obtain the following proposition.

\begin{proposition}[Gradient of function minimumizer]
  \label{prop:diffargmin}
  Let $f: \mathbb{R}^{n+m} \rightarrow \mathbb{R}$ be a real-valued function of the class $C^2$, and let $\mathbb{R}^{n+m}$ have coordinates $(\mathbf{x}, \mathbf{y})$. If the Hessian matrix $\frac{\partial^2 f(\mathbf{x},\mathbf{y})}{\partial\mathbf{y}^2} \in \mathbb{R}^{m \times m}$ is positive definite at a fixed point $(\mathbf{x}_0, \mathbf{y}_0)$, there exist open sets $U \subset \mathbb{R}^n$ and $V \subset \mathbb{R}^m$ respectively including $\mathbf{x}_0$ and $\mathbf{y}_0$ such that there exists a function $h: U \rightarrow V$ that satisfies
  \begin{equation}
    \mathbf{y}_0 = h(\mathbf{x}_0) = \argmin_{\mathbf{y} \in V} f(\mathbf{x}_0, \mathbf{y}).
    \label{eq:local-minimum}
  \end{equation}
  Moreover, the gradient of $h(\mathbf{x})$ is given as
  \begin{equation}
    \frac{\diff h(\mathbf{x})}{\diff \mathbf{x}} = -\left( \frac{\partial^2 f}{\partial \mathbf{y}^2} \right)^{-1} \frac{\partial^2 f}{\partial \mathbf{x} \partial \mathbf{y}}.
    \label{eq:diff-argmin}
  \end{equation}
\end{proposition}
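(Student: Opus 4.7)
The plan is to reduce \cref{prop:diffargmin} to a direct application of \cref{thm:implicit} at the first-order optimality condition. First, I set $g(\mathbf{x}, \mathbf{y}) := \partial f(\mathbf{x}, \mathbf{y})/\partial \mathbf{y} \in \mathbb{R}^m$; since $f$ is of class $C^2$, $g$ is of class $C^1$. The statement tacitly takes $\mathbf{y}_0$ to be a critical point of $f(\mathbf{x}_0,\cdot)$ (it cannot otherwise equal the argmin), so $g(\mathbf{x}_0, \mathbf{y}_0) = \mathbf{0}$. The Jacobian of $g$ with respect to $\mathbf{y}$ coincides with the Hessian $\partial^2 f/\partial\mathbf{y}^2$, which is positive definite at $(\mathbf{x}_0, \mathbf{y}_0)$ by hypothesis and hence invertible. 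All hypotheses of \cref{thm:implicit} are therefore met.

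Invoking \cref{thm:implicit} then yields open sets $U \ni \mathbf{x}_0$ and $V \ni \mathbf{y}_0$ and a unique $C^1$ map $h: U \rightarrow V$ with $h(\mathbf{x}_0) = \mathbf{y}_0$ and $g(\mathbf{x}, h(\mathbf{x})) = \mathbf{0}$ on $U$. The gradient formula supplied by the same theorem reads
\begin{equation*}
  \frac{\diff h(\mathbf{x})}{\diff \mathbf{x}} = -\left(\frac{\partial g}{\partial \mathbf{y}}\right)^{-1}\frac{\partial g}{\partial \mathbf{x}} = -\left(\frac{\partial^2 f}{\partial \mathbf{y}^2}\right)^{-1}\frac{\partial^2 f}{\partial \mathbf{x}\,\partial \mathbf{y}},
\end{equation*}
which is exactly \cref{eq:diff-argmin}; the second equality merely substitutes back the definition of $g$, and the required commutation of mixed partials is ensured by $f \in C^2$.

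The remaining task, and the part I expect to be the main obstacle, is upgrading $h(\mathbf{x})$ from a mere critical point to the argmin asserted in \cref{eq:local-minimum}. The implicit function theorem alone produces only a zero of the gradient, not a minimizer, so I would close the gap by a continuity argument: because $\partial^2 f/\partial \mathbf{y}^2$ is continuous and positive definite at $(\mathbf{x}_0, \mathbf{y}_0)$, it stays positive definite on some product neighborhood of this point. Shrinking $U$ and $V$ so that $V$ is a convex ball on which $\partial^2 f(\mathbf{x}, \cdot)/\partial \mathbf{y}^2$ remains positive definite for every $\mathbf{x}\in U$, the map $\mathbf{y}\mapsto f(\mathbf{x},\mathbf{y})$ becomes strictly convex on $V$. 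Its unique critical point $h(\mathbf{x})$ in $V$ is then the unique minimizer of $f(\mathbf{x},\cdot)$ over $V$, establishing \cref{eq:local-minimum} and completing the proof.
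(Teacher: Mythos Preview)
Your proposal is correct and follows essentially the same route as the paper: define $g = \partial f/\partial\mathbf{y}$, verify the hypotheses of \cref{thm:implicit}, and read off \cref{eq:diff-argmin}. You are in fact more careful than the paper on two points---explicitly flagging that $g(\mathbf{x}_0,\mathbf{y}_0)=\mathbf{0}$ is a tacit assumption, and supplying the convexity argument (shrinking $U,V$ so the Hessian stays positive definite) to upgrade the critical point to an actual $\argmin$ over $V$---whereas the paper simply asserts the local-minimum property at $(\mathbf{x}_0,\mathbf{y}_0)$ and moves on.
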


\begin{proof}
  The coordinate $\mathbf{y}_0$ is a local minimum of $f(\mathbf{x}_0, \mathbf{y})$ with the fixed $\mathbf{x}_0$ due to the twice differentiability of $f(\mathbf{x}, \mathbf{y})$ and the positive definiteness of the Hessian matrix $\frac{\partial^2 f(\mathbf{x}_0, \mathbf{y}_0)}{\partial\mathbf{y}^2}$. Therefore, $h(\mathbf{x})$ can be defined as in \cref{eq:local-minimum}. On the other hand, the gradient $g(\mathbf{x}, \mathbf{y}) = \frac{\partial f(\mathbf{x}, \mathbf{y})}{\partial \mathbf{y}}$ is a function of the class $C^1$, and the Jacobian matrix $\frac{\partial g(\mathbf{x}, \mathbf{y})}{\partial \mathbf{y}}$ is invertible due to the non-singularity of the positive definite matrix. By applying \cref{thm:implicit} to $g(\mathbf{x}, \mathbf{y})$ with $g(\mathbf{x}_0, \mathbf{y}_0) = \mathbf{0}$ at the point $(\mathbf{x}_0, \mathbf{y}_0)$ of the local minimum, the gradient of $h(\mathbf{x})$ is obtained as in \cref{eq:diff-argmin}.
  \qed
\end{proof}


\noindent
According to \cref{prop:diffargmin}, the gradient is independent of how the local minimum is obtained. Therefore, when the proposition is applied to the point-to-plane registration, we can calculate the analytic gradients of $\mathbf{R}^{*}$ and $\mathbf{t}^{*}$ efficiently, even when they are obtained by the iterative accumulation.

\subsection{Analytic gradients for point-to-plane registration}
\label{ssec:plane-icp}

Suppose that the point-to-point correspondences $\{ \mathbf{x}_i, (\mathbf{y}_i, \mathbf{n}_i) \}_{i=1}^N$ is already defined by a neural network, such as the one used in DCP~\cite{wang2019deep}. In this case, we assume that $\mathbf{y}_i$ and $\mathbf{n}_i$ for each $i$ are functions of the network parameters $\boldsymbol{\uptheta}$. To update parameters $\boldsymbol{\uptheta}$, we require calculating the gradients of a loss $\mathcal{L}$ with respect to $\boldsymbol{\uptheta}$, which can be developed using the chain rule.
\begin{equation}
  \frac{\partial\mathcal{L}}{\partial\boldsymbol{\uptheta}} = \sum_{i=1}^N \left(
  \frac{\partial\mathcal{L}}{\partial\mathbf{g}^*} \frac{\partial \mathbf{g}^*}{\partial\mathbf{y}_i} \frac{\partial \mathbf{y}_i}{\partial\boldsymbol{\uptheta}}
  +
  \frac{\partial\mathcal{L}}{\partial\mathbf{g}^*} \frac{\partial \mathbf{g}^*}{\partial\mathbf{n}_i} \frac{\partial \mathbf{n}_i}{\partial\boldsymbol{\uptheta}}
  \right),
  \label{eq:chain-rule}
\end{equation}
where $\mathbf{g}^{*} = (R_{00}^*, R_{01}^{*}, R_{02}^{*}, \ldots, R_{22}^{*}, t_0^{*}, t_1^{*}, t_2^{*}) \in \mathbb{R}^{12}$ is a vector for a rigid transformation. In \cref{eq:chain-rule}, the terms in the right hand side, except for $\frac{\partial\mathbf{g}^{*}}{\partial\mathbf{y}_i} \in \mathbb{R}^{12\times 3}$ and $\frac{\partial\mathbf{g}^{*}}{\partial\mathbf{n}_i} \in \mathbb{R}^{12\times 3}$, can be computed efficiently by the auto-differentiation of the neural network. Therefore, we can calculate the gradients in \cref{eq:chain-rule} efficiently if $\frac{\partial\mathbf{g}^{*}}{\partial\mathbf{y}_i}$ and $\frac{\partial\mathbf{g}^{*}}{\partial\mathbf{n}_i}$ for each $i$ have closed forms.

The important point here is that we cannot apply \cref{prop:diffargmin} directly to a constrained minimization. To relax the constraint, we introduce a quadratic penalty term $P(\mathbf{R})$ for the orthogonality of $\mathbf{R}$ and approximate $(\mathbf{R}^{*}, \mathbf{t}^*)$ as
\begin{align}
  ( \mathbf{R}^{*}, \mathbf{t}^{*} ) & \approx \argmin_{\mathbf{R} \in \mathbb{R}^{3\times 3}, \mathbf{t} \in \mathbb{R}^3} \hat{E}(\mathbf{R}, \mathbf{t}), \\
  \hat{E}(\mathbf{R}, \mathbf{t}) & = E(\mathbf{R}, \mathbf{t}) + \lambda P(\mathbf{R}),
  \quad P(\mathbf{R})  = \| \mathbf{R}^\top \mathbf{R} - \mathbf{I}_{[3]} \|_F^2, \label{eq:penalty-term}
\end{align}
where $\lambda$ is a parameter to control the strength of the penalty, $\mathbf{I}_{[d]} \in \mathbb{R}^{d \times d}$ is an identity matrix of dimension $d$, and $\| \cdot \|_F$ is the Frobenius norm of a matrix. Here, the penalty term $P(\mathbf{R})$ in \cref{eq:penalty-term}, which do not care about the sign of the determinant, is sufficient because the determinant of a rotation matrix obtained by the Rodrigues' formula is always $+1$. Therefore, \cref{prop:diffargmin} derives the gradients as
\begin{gather}
  \frac{\partial \mathbf{g}^*}{\partial \mathbf{y}_i} = \left( \frac{\partial^2 \hat{E}(\mathbf{g}^{*})}{\partial \mathbf{g}^2} \right)^{-1} \frac{\partial^2 \hat{E}(\mathbf{g}^{*})}{\partial \mathbf{y}_i \partial \mathbf{g}}, \quad
  \frac{\partial \mathbf{g}^*}{\partial \mathbf{n}_i} = \left( \frac{\partial^2 \hat{E}(\mathbf{g}^{*})}{\partial \mathbf{g}^2} \right)^{-1} \frac{\partial^2 \hat{E}(\mathbf{g}^{*})}{\partial \mathbf{n}_i \partial \mathbf{g}}. \label{eq:gradients}
\end{gather}
For simplicity, we henceforth denote $\mathbf{R}^{*}$, $\mathbf{t}^{*}$ and $\mathbf{g}^{*}$ just as $\mathbf{R}$, $\mathbf{t}$, and $\mathbf{g}$ unless otherwise noted. Here, the Hessian matrix $\frac{\partial^2 \hat{E}(\mathbf{g})}{\partial \mathbf{g}^2}$ in \cref{eq:gradients} is defined as
\begin{align}
  & \frac{\partial^2 \hat{E}(\mathbf{g})}{\partial \mathbf{g}^2} = 2 \sum_{i=1}^N (\hat{\mathbf{n}}_i \odot \hat{\mathbf{x}}_i ) (\hat{\mathbf{n}}_i \odot \hat{\mathbf{x}}_i )^\top
  + 4 \lambda \begin{bmatrix}
    \mathbf{M}                & \mathbf{0}_{[9\times 3]} \\
    \mathbf{0}_{[3 \times 9]} & \mathbf{0}_{[3\times 3]}
  \end{bmatrix} \in \mathbb{R}^{12 \times 12},
  \label{eq:ddEdrdr} \\
  & \mathbf{M} = \hat{\mathbf{R}} \odot \hat{\mathbf{R}}^\top + \mathbf{I}_{[9]} + (\mathbf{R} \mathbf{R}^\top \!-\! \mathbf{I}_{[3]}) \otimes \mathbf{I}_{[3]} + \mathbf{I}_{[3]} \otimes (\mathbf{R}^\top \mathbf{R} \!-\! \mathbf{I}_{[3]}) \in \mathbb{R}^{9 \times 9},
\end{align}
and the other derivatives in the right-hand side are as
\begin{align}
  \frac{\partial^2 \hat{E}(\mathbf{g})}{\partial \mathbf{y}_i \partial \mathbf{g}} & = -2 \left( \hat{\mathbf{n}}_i \odot \hat{\mathbf{x}}_i \right) {\hat{\mathbf{n}}_i}^\top \in \mathbb{R}^{12 \times 3}, \\
  \frac{\partial^2 \hat{E}(\mathbf{g})}{\partial \mathbf{n}_i \partial \mathbf{g}} & = 2 (\hat{\mathbf{n}}_i \odot \hat{\mathbf{x}}_i) (\mathbf{R} \mathbf{x}_i + \mathbf{t} - \mathbf{y}_i)^\top + 2 ((\mathbf{R} \mathbf{x}_i + \mathbf{t} - \mathbf{y}_i) \cdot \mathbf{n}_i) \hat{\mathbf{X}}_i \in \mathbb{R}^{12 \times 3}.
\end{align}
In these equations, we use the following notations.
\begin{align}
  & \hat{\mathbf{x}}_i = \begin{bmatrix}
    \mathbf{1}_{[3 \times 1]} \otimes \mathbf{x}_i \\
    \mathbf{1}
  \end{bmatrix} \in \mathbb{R}^{12}, \quad
  \hat{\mathbf{n}}_i = \begin{bmatrix}
    \mathbf{n}_i \otimes \mathbf{1}_{[3 \times 1]} \\
    \mathbf{n}_i
  \end{bmatrix} \in \mathbb{R}^{12}, \\
  & \hat{\mathbf{X}}_i = \begin{bmatrix}
    \mathbf{I}_{[3]} \otimes \mathbf{x}_i \\
    \mathbf{I}_{[3]}
  \end{bmatrix} \in \mathbb{R}^{12 \times 3}, \quad
  \hat{\mathbf{R}} = \begin{bmatrix}
    \mathbf{1}_{[3\times 3]} \otimes \mathbf{r}_0^\top & \mathbf{1}_{[3\times 3]} \otimes \mathbf{r}_1^\top & \mathbf{1}_{[3\times 3]} \otimes \mathbf{r}_2^\top
  \end{bmatrix} \in \mathbb{R}^{9 \times 9},
\end{align}
where $\mathbf{0}_{[m \times n]} \in \mathbb{R}^{m \times n}$ and $\mathbf{1}_{[m \times n]} \in \mathbb{R}^{m \times n}$ are the matrices whose entries are all zero and one, respectively, and $\mathbf{r}_i$ is the $i$th row vector of the rotation matrix $\mathbf{R}$. We further denote the Kronecker product of two matrices by an operator $\otimes$ and the element-wise multiplication (i.e., the Hadamard product) of two vectors or matrices by another operator $\odot$. Therefore, we can efficiently calculate the analytic gradients in \cref{eq:gradients} because these expressions consist only of basic vector and matrix algebra. Although we also need to calculate $\frac{\partial\mathcal{L}}{\partial\mathbf{x}_i}$ depending on the definitions of point pairs using a neural network, it can also be calculated efficiently, as shown in the supplementary document. 


\paragraph{Penalty parameter}

The remainder to define the efficient backpropagation is $\lambda$, which controls the penalty for the matrix orthogonality. Theoretically, a rotation matrix $\mathbf{R}$ obtained by the iterative accumulation in \cref{ssec:linearized-rotation} is orthogonal. Therefore, the penalty coefficient $\lambda$ should be infinity because the penalty term in \cref{eq:penalty-term} and its gradient are always zero for orthogonal $\mathbf{R}$. In contrast, the penalty term is not zero in numerical computation, and then, $\lambda$ will be finite. The gradient $\frac{\partial \hat{E}}{\partial \mathbf{R}}$ will be the zero vector at a local minimum when we assume the rotation matrix $\mathbf{R}$ is obtained by minimizing \cref{eq:penalty-term} using the penalty method. Therefore, we obtain $\lambda$ by solving $\frac{\partial \hat{E}(\lambda)}{\partial \mathbf{R}} = \mathbf{0}$ with respect to $\lambda$ by the least-squares method, which obtains
\begin{align}
  & \qquad \quad \lambda = \left| \frac{\partial P}{\partial \mathbf{g}} \cdot \frac{\partial E}{\partial \mathbf{g}} \right| \Big/ \left| \frac{\partial P}{\partial \mathbf{g}} \cdot \frac{\partial P}{\partial \mathbf{g}} \right|,\\
  \frac{\partial E}{\partial \mathbf{g}} & = \left( 2 \sum_{i=1}^{N} \left( \left( \mathbf{R} \mathbf{x}_i + \mathbf{t} - \mathbf{y}_i \right) \cdot \mathbf{n}_i \right) (\mathbf{x}'_i \odot \mathbf{n}'_i) \right)_{[0:9]} \!\in\! \mathbb{R}^{9}, \label{eq:dEdr} \\
  \frac{\partial P}{\partial \mathbf{g}} & = \text{vec} \left( 4 ({\mathbf{R}}^\top \mathbf{R} - \mathbf{I}_{[3]}) \mathbf{R} \right) \in \mathbb{R}^{9},
\end{align}
where $(\,\cdot\,)_{[0:9]}$ is the first $9$ entries of a vector, and $\text{vec}(\,\cdot\,)$ is a vector comprised of matrix entries in a row-major order. Since the penalty term is introduced only for the purpose of defining analytic gradients, we can calculate $\mathbf{R}$ and $\mathbf{t}$ by the iterative accumulation rather than solving the constrained problem.

\paragraph{Discussion}

The previous study~\cite{chen2020end} proposed a similar implicit gradient to ours for the PnP problem and calculated the gradient of a camera pose in $SE(3)$ based on the 6 DoF axis-angle representation of the rigid transformation. It also reported that other representations, such as 7 DoF quaternion with translation and 12 DoF rotation matrix with translation, did not work very well. In contrast, we proved that the gradients for a constrained minimization problem could be properly obtained with a penalty function. On the other hand, we can use a Lagrange multiplier as well to formulate an unconstrained problem rather than using the penalty term. However, the Lagrange multiplier must be optimized together with $\mathbf{R}$ and $\mathbf{t}$, and the energy function is only once differentiable with the Lagrange multiplier. Therefore, the requirement for \cref{prop:diffargmin} is not satisfied. This is the reason why we use the penalty method.

\subsection{Implementation over DCP}
\label{ssec:imple-on-dcp}

We implement our point-to-plane registration module over DCP-v2~\cite{wang2019deep}, PRNet~\cite{wang2019prnet}, and RPMNet~\cite{yew2020rpmnet} in our experiments. The details of the two latter implementations will be explained in the supplementary document due to space limitations, while the explanation below with DCP-v2 can mostly apply to them.

We make three updates over the original DCP-v2 to introduce the point-to-plane registration module. First, our method, which is based on point-to-plane registration, uses point normals. Therefore, both positions and normals of the input points are leveraged by the feature extractor, i.e., the combination of a Siamese dynamic graph CNN (DGCNN)~\cite{wang2019dynamic} and a Transformer~\cite{vaswani2017attention}. The feature extractors output the features $\Phi_\mathcal{X} \in \mathbb{R}^{N \times C}$ and $\Phi_\mathcal{Y} \in \mathbb{R}^{M \times C}$ to $\mathcal{X}$ and $\mathcal{Y}$, respectively, where $C$ is the dimension of features.

Second, we average normal tensors rather than normal vectors to obtain \textit{soft pointers} to normals. The soft pointer to a point position is obtained by $\mathbf{c}_i = \text{softmax}(\Phi_\mathcal{Y} \Phi_{\mathbf{x}_i}^\top) \in \mathbb{R}^M$, where $\Phi_{\mathbf{x}_i}$ is the $i$th row of $\Phi_\mathcal{X}$. Then, we define a point $\mathbf{y}'_i = \sum_{j=1}^M c_{ij} \mathbf{y}_j$ corresponding to $\mathbf{x}_i$. However, a simple weighted average is unsuitable to normals because the normals estimated with input point clouds often involves the directional ambiguity and their directions can be vanished by averaging. Instead, we define a normal tensor $\mathbf{S}_j = \mathbf{n}_j \mathbf{n}_j^\top$ for each $\mathbf{y}_j$ and calculate the weighted average of tensors $\mathbf{S}'_i = \sum_{j=1}^{M} c_{ij} \mathbf{S}_j$. The normal vector $\mathbf{n}'_i$ for $\mathbf{y}'_i$ is given as an eigenvector of $\mathbf{S}'_i$ associated with its largest eigenvalue.

Finally, we apply our point-to-\textit{plane} registration module to $\mathcal{X}$ and the soft pointers $\mathcal{Y}' = \{ (\mathbf{y}'_i, \mathbf{n}'_i) \}_{i=1}^N$, as an alternative of the Procrustes algorithm for point-to-\textit{point} registration. Our point-to-plane registration module obtains $(\mathbf{R}, \mathbf{t})$ by the iterative accumulation over several iterations. Following the original DCP-v2, we train the neural network using a rigid motion loss $\mathcal{L} = \| \mathbf{R}^\top \mathbf{R}_{\rm gt} - \mathbf{I}_{[3]} \|_F^2 + \| \mathbf{t} - \mathbf{t}_{\rm gt} \|_2^2$ and Tikhonov regularization for the network weights, where $\mathbf{R}_{\rm gt}$ and $\mathbf{t}_{\rm gt}$ are the ground-truth rigid transformation. The calculated loss can be efficiently backpropagated by the proposed method.

\section{Experiments and Discussion}
\label{sec:experiments}

We implement the proposed method over DCP-v2~\cite{wang2019deep}, PRNet~\cite{wang2019prnet}, and RPMNet~\cite{yew2020rpmnet} to demonstrate the broad applicability of our point-to-plane registration module. These networks are optimized using ADAM~\cite{kingma2014adam} with the learning rate $\gamma = 0.0001$ and decay parameters $(\beta_1, \beta_2)$ = $(0.9, 0.999)$. The network is trained over 100 epochs, and the learning rate is halved at the end of every 10 epochs. The following experiments are performed using a single NVIDIA A6000 GPU.

We compare the proposed method with the previous end-to-end learning frameworks (i.e., DCP-v1 and DCP-v2~\cite{wang2019deep}, PRNet~\cite{wang2019prnet}, RPMNet~\cite{yew2020rpmnet}, DeepGMR~\cite{yuan2020deepgmr}, and PointNetLK revisited~\cite{li2021pointnetlk}). For these methods, we used the source codes provided by the authors and trained the networks by ourselves using the same datasets. Moreover, we compare our method with traditional approaches without deep learning, such as the ordinary point-to-point registration (ICP), Go-ICP~\cite{yang2013goicp}, fast global registration (FGR)~\cite{zhou2016fast}, and a pipeline comprised of feature-based ICP with 10000 RANSAC iterations followed by a refinement step based on point-to-plane distances (RANSAC-10K). We used the implementations provided by Open3D~\cite{zhou2018open3d} for ICP, FGR, and RANSAC-10K, and used the program provided by the authors for Go-ICP.


The comparison is based on the mean squared error (MSE), root mean squared error (RMSE), mean absolute error (MAE), and coefficient of determination ($\mathrm{R}^2$). Angular measurements for rotation matrices are in units of degrees. MSE, RMSE, and MAE should be zero while $\mathrm{R}^2$ should be equal to one if the rigid transformation is perfect. \Cref{fig:teaser,fig:results} show the registration results for two datasets (i.e., ModelNet40~\cite{wu2015shapenets} and ICL-NUIM~\cite{choi2015robust}) used in the following experiments, where the results are obtained by our extension over RPMNet. In tables below (i.e., \cref{tab:modelnet40,tab:modelnet40-unseen,tab:iclnuim}), the best value in each column is highlighted with a \colorbox[rgb]{1,0.8,0.85}{red box}, and the values for our extensions are highlighted by \textbf{bolded letters} when each of them is better than that of the base method. Refer to the supplementary document for further analyses with visual comparisons.

\begin{figure}[t]
  \centering
  \includegraphics[width=\linewidth]{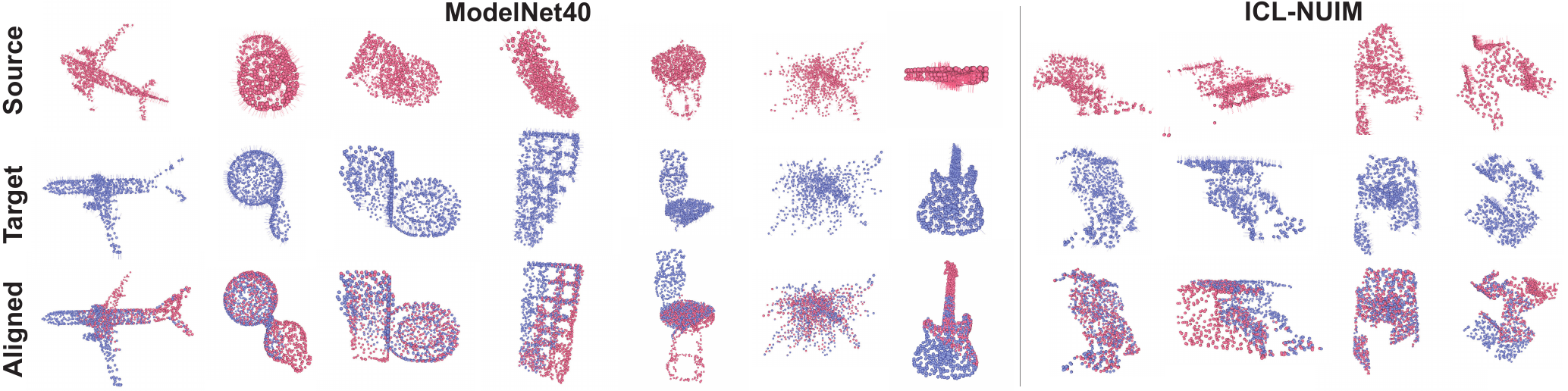}
  \caption{Point registration results obtained by our point-to-plane registration module on RPMNet~\cite{yew2020rpmnet} that performed the best on both ModelNet40 and ICL-NUIM (see \cref{tab:modelnet40,tab:modelnet40-unseen,tab:iclnuim}). For ease of recognition of shapes, we show the results for the original ModelNet40 that are obtained by the network trained with ModelNet40-CPU.}
  \label{fig:results}
\end{figure}

\begin{table*}[t]
  \centering
  \caption{Quantitative comparison on ModelNet40-CPU that includes the pairs of composed, partial, and unduplicated point clouds. As shown by the figure, our extensions over prior methods outperform the original ones, and the extension for RPMNet~\cite{yew2020rpmnet} performs the best of all the methods listed.}
  \label{tab:modelnet40}
  {\tablefontsize
    \begin{tblr}{%
      colspec={l | M[c]N[c]N[c]N[c] | N[c]N[c]N[c]N[c]},%
      hline{1,15}={0.8pt},%
      hline{2}={0.3pt},%
      hline{6,12}={0.1pt,dashed},%
      colsep=1.5pt,%
      rowsep=0.3pt,%
      width=\linewidth
    }
      Model & {{{MSE($\mathbf{R}$)$\downarrow$}}} & {{{RMSE($\mathbf{R}$)$\downarrow$}}} & {{{MAE($\mathbf{R}$)$\downarrow$}}} & {{{$\mathrm{R}^2$($\mathbf{R}$)$\uparrow$}}} & {{{MSE($\mathbf{t}$)$\downarrow$}}} & {{{RMSE($\mathbf{t}$)$\downarrow$}}} & {{{MAE($\mathbf{t}$)$\downarrow$}}} & {{{$\mathrm{R}^2$($\mathbf{t}$)$\uparrow$}}} \\
      ICP & 1580.827637 & 39.759624 & 27.912855 & -8.328819 & 0.140135 & 0.374346 & 0.286671 & -0.669430 \\
      Go-ICP~\cite{yang2013goicp} & 1153.525879 & 33.963596 & 23.638304 & -5.797102 & 0.595698 & 0.771815 & 0.589918 & -6.092995 \\
      FGR~\cite{zhou2016fast} & 165.489548 & 12.864274 & 5.914471 & 0.024253 & 0.017777 & 0.133329 & 0.085723 & 0.788066 \\
      RANSAC-10K & 114.801056 & 10.714525 & 1.665848 & 0.321655 & 0.006610 & 0.081301 & 0.017750 & 0.921368 \\
      DCP-v1~\cite{wang2019deep} & 52.947208 & 7.276483 & 5.611143 & 0.688199 & 0.011377 & 0.106665 & 0.081634 & 0.864490 \\
      DCP-v2~\cite{wang2019deep} & 47.529240 & 6.894145 & 5.133123 & 0.720141 & 0.005951 & 0.077143 & 0.058926 & 0.929107 \\
      PRNet~\cite{wang2019prnet} & 10.798091 & 3.286045 & 2.292507 & 0.936349 & 0.004012 & 0.063337 & 0.045868 & 0.952221 \\
      RPMNet~\cite{yew2020rpmnet} & 8.326686 & 2.885600 & 1.847013 & 0.950920 & 0.003124 & 0.055891 & 0.034369 & 0.962782 \\
      DeepGMR~\cite{yuan2020deepgmr} & 144.794037 & 12.033039 & 8.484650 & 0.147233 & 0.011807 & 0.108661 & 0.081751 & 0.859354 \\
      PNLK-R~\cite{li2021pointnetlk} & 539.193298 & 23.220535 & 18.872650 & -2.173193 & 0.142098 & 0.376958 & 0.280179 & -0.694167 \\
      Ours (DCP-v2) & \bold 30.991428 & \bold 5.566995 & \bold 3.814231 & \bold 0.817597 & \bold 0.004674 & \bold 0.068368 & \bold 0.049306 & \bold 0.944349 \\
      Ours (PRNet) & \bold 10.395678 & \bold 3.224233 & \bold 1.945863 & \bold 0.938705 & \bold 0.003577 & \bold 0.059811 & \bold 0.040763 & \bold 0.957411 \\
      Ours (RPMNet) & \first \bold 3.655338 & \first \bold 1.911894 & \first \bold 0.599530 & \first \bold 0.978027 & \first \bold 0.000971 & \first \bold 0.031155 & \first \bold 0.010273 & \first \bold 0.988280 \\
    \end{tblr}}
\end{table*}

\begin{table*}[t]
  \centering
  \caption{Quantitative comparison on ModelNet40-CPU-unseen. Different from \cref{tab:modelnet40-unseen}, the scores in this table are obtained by training the networks using 20 object categories, and are tested using the other 20 categories of ModelNet40.}
  \label{tab:modelnet40-unseen}
  {\tablefontsize
    \begin{tblr}{%
      colspec={l | M[c]N[c]N[c]N[c] | N[c]N[c]N[c]N[c]},%
      hline{1,15}={0.8pt},%
      hline{2}={0.3pt},%
      hline{6,12}={0.1pt,dashed},%
      colsep=1.5pt,%
      rowsep=0.3pt,%
      width=\linewidth
    }
      Model & {{{MSE($\mathbf{R}$)$\downarrow$}}} & {{{RMSE($\mathbf{R}$)$\downarrow$}}} & {{{MAE($\mathbf{R}$)$\downarrow$}}} & {{{$\mathrm{R}^2$($\mathbf{R}$)$\uparrow$}}} & {{{MSE($\mathbf{t}$)$\downarrow$}}} & {{{RMSE($\mathbf{t}$)$\downarrow$}}} & {{{MAE($\mathbf{t}$)$\downarrow$}}} & {{{$\mathrm{R}^2$($\mathbf{t}$)$\uparrow$}}} \\
      ICP & 1461.590088 & 38.230747 & 27.280455 & -7.653234 & 0.123421 & 0.351313 & 0.275050 & -0.491319 \\
      Go-ICP~\cite{yang2013goicp} & 752.880371 & 27.438665 & 12.653526 & -3.457974 & 0.098744 & 0.314236 & 0.194148 & -0.194986 \\
      FGR~\cite{zhou2016fast} & 156.695557 & 12.517810 & 5.988338 & 0.071998 & 0.018422 & 0.135729 & 0.088993 & 0.777095 \\
      RANSAC-10K & 74.711052 & 8.643556 & 1.400137 & 0.557963 & 0.005686 & 0.075405 & 0.017702 & 0.931008 \\
      DCP-v1~\cite{wang2019deep} & 69.185524 & 8.317783 & 6.399209 & 0.590056 & 0.014084 & 0.118675 & 0.092526 & 0.829668 \\
      DCP-v2~\cite{wang2019deep} & 55.167004 & 7.427449 & 5.580036 & 0.673163 & 0.006034 & 0.077676 & 0.059687 & 0.926833 \\
      PRNet~\cite{wang2019prnet} & 19.166744 & 4.377984 & 3.024774 & 0.886486 & 0.006854 & 0.082788 & 0.057999 & 0.917226 \\
      RPMNet~\cite{yew2020rpmnet} & 19.365210 & 4.400592 & 2.614891 & 0.885353 & 0.006195 & 0.078711 & 0.049118 & 0.925147 \\
      DeepGMR~\cite{yuan2020deepgmr} & 170.681854 & 13.064527 & 9.462258 & -0.010950 & 0.011070 & 0.105213 & 0.080162 & 0.865122 \\
      PNLK-R~\cite{li2021pointnetlk} & 510.894897 & 22.602983 & 16.767567 & -2.008358 & 0.159235 & 0.399043 & 0.300905 & -0.898705 \\
      Ours (DCP-v2) & \bold 38.865215 & \bold 6.234197 & \bold 4.225717 & \bold 0.769787 & \bold 0.005420 & \bold 0.073620 & \bold 0.053186 & \bold 0.934374 \\
      Ours (PRNet) & 28.377750 & 5.327077 & \bold 2.435838 & 0.832150 & \bold 0.004674 & \bold 0.068369 & \bold 0.046508 & \bold 0.943376\\
      Ours (RPMNet) & \first \bold 14.419297 & \first \bold 3.797275 & \first \bold 1.476977 & \first \bold 0.914586 & \first \bold 0.002485 & \first \bold 0.049850 & \first \bold 0.022804 & \first \bold 0.969880 \\
    \end{tblr}}
\end{table*}

\paragraph{Registration on ModelNet40}
\label{ssec:exp-modelnet40}

The ModelNet40 dataset includes 12311 synthetic CAD models from 40 object categories, of which 9843 models are for training, and the remaining 2468 models are for testing. ModelNet40 has several simple shapes, such as a simple flat plane belonging to the \textit{door} category. As reported in \cite{rusinkiewicz2001efficient,gelfand2003geometrically}, these simple shapes cause rotational and translational ambiguity in the rigid transformation that minimizes point-to-plane distances. For appropriate performance analysis of our point-to-plane registration, we perform further data augmentation on ModelNet40 to eliminate the ambiguity. Along with each point cloud sampled from either train or test split, we randomly sample two other point clouds in the same split. Then, we store points of these three models into a single point cloud $\mathcal{X}_{\rm all}$ after applying random rigid transformations. In this way, the composed point cloud $\mathcal{X}_{\rm all}$ will almost never involve the ambiguity.

After the composition, we obtain a target point cloud $\mathcal{Y}_{\rm all}$ by applying another random rigid transformation to $\mathcal{X}_{\rm all}$. Specifically, we followed DCP~\cite{wang2019deep} and PRNet~\cite{wang2019prnet} papers and used a random rigid transformation, of which the rotation along each axis is uniformly sampled from [\ang{0}, \ang{45}] and translation is in $[-0.5, 0.5]$. After the random transformation, we sample 1024 points individually from $\mathcal{X}_{\rm all}$ and $\mathcal{Y}_{\rm all}$ at random. Finally, we further sample 768 points from two point clouds to obtain $\mathcal{X}$ and $\mathcal{Y}$ by simulating the partial scan as in the PRNet paper~\cite{wang2019prnet}. After the data augmentation, $\mathcal{X}$ and $\mathcal{Y}$ correspond to different parts, and they are \textit{unduplicated}, where the points in two sets do not go to exactly the same locations after registration. It is worth noting that the previous methods, such as DCP~\cite{wang2019deep} and PRNet~\cite{wang2019prnet}, used \textit{duplicated} point clouds for evaluation, where they sampled a single point subset from the raw point cloud and transformed it with two random transformations to obtain source and target point clouds. Due to the difference, our experimental setup with \textit{unduplicated} point organizations is more challenging. Hence, the scores obtained by our experiments are slightly worse than those shown in the previous papers. We refer to the dataset after the data augmentation as ModelNet40 compose-partial-unduplicated (ModelNet40-CPU). The following experiment uses ground-truth point normals provided by ModelNet40 for FGR, RANSAC-10K, RPMNet, and our method.

As shown by the result in \cref{tab:modelnet40}, DCP-v2, PRNet, and RPMNet work well for partial and unduplicated point clouds, and by introducing the point-to-plane registration module, they get even better. Our extensions outperform the base approaches in estimating both rotation and translation. This result suggests the effectiveness of the point-to-plane registration to current deep-learning-based methods. Particularly, our method on RPMNet performs the best among all the listed approaches and is better than the original RPMNet. Although RPMNet uses point normals only to compute PPF (i.e., the part of input point features), our method performs better by leveraging the point normals also in the registration. Furthermore, almost the same results can be obtained for another test case, as shown in \cref{tab:modelnet40-unseen}, where 20 object categories are used for training, and the other 20 categories are used for evaluation. Thus, our method is robust to the shapes not included in the training data.

In addition, we found that training of DeepGMR and PNLK-R was difficult using partial and unduplicated point clouds. As for DeepGMR, we found the difficulty is due to rigorously rotationally invariant (RRI) features~\cite{chen2019clusternet}. The RRI features are not consistent for partial and unduplicated point clouds and affect the performance adversely. Even though we tested to input simple point positions instead to the network in this experiment, the network was overfitted to the training data, and the evaluation performance was not improved significantly. Similarly, we were unable to train PNLK-R with partial and unduplicated point clouds. We confirmed that PNLK-R could only be trained appropriately when the input point clouds are almost aligned but could not with the above setup where the input can rotate up to \ang{45} three times.


\begin{table}[t]
  \centering
  \caption{Quantitative comparison on the augmented ICL-NUIM dataset with simulated depth noise. For this dataset, the point normals are estimated with positions of point subsets and therefore involves random directional flips.}
  \label{tab:iclnuim}
  {\tablefontsize
    \begin{tblr}{%
      colspec={l | M[c]N[c]N[c]N[c] | N[c]N[c]N[c]N[c]},%
      hline{1,15}={0.8pt},%
      hline{2}={0.3pt},%
      hline{6,12}={0.1pt,dashed},%
      colsep=1.5pt,%
      rowsep=0.3pt,%
      width=\linewidth
    }
      Model & {{{MSE($\mathbf{R}$)$\downarrow$}}} & {{{RMSE($\mathbf{R}$)$\downarrow$}}} & {{{MAE($\mathbf{R}$)$\downarrow$}}} & {{{$\mathrm{R}^2$($\mathbf{R}$)$\uparrow$}}} & {{{MSE($\mathbf{t}$)$\downarrow$}}} & {{{RMSE($\mathbf{t}$)$\downarrow$}}} & {{{MAE($\mathbf{t}$)$\downarrow$}}} & {{{$\mathrm{R}^2$($\mathbf{t}$)$\uparrow$}}} \\
      ICP & 1239.258057 & 35.203098 & 25.900484 & -6.351683 & 0.487412 & 0.698149 & 0.383593 & -4.786241 \\
      Go-ICP~\cite{yang2013goicp} & 798.174622 & 28.251984 & 12.308638 & -3.774263 & 0.588486 & 0.767128 & 0.364457 & -6.359986 \\
      FGR~\cite{zhou2016fast} & 169.250778 & 13.009642 & 6.020503 & 0.001843 & 0.018020 & 0.134240 & 0.086465 & 0.785188 \\
      RANSAC-10K & 77.618080 & 8.810112 & 1.426499 & 0.541712 & 0.005588 & 0.074754 & 0.016481 & 0.933445 \\
      DCP-v1~\cite{wang2019deep} & 136.492889 & 11.683017 & 8.958026 & 0.160519 & 0.034493 & 0.185724 & 0.144223 & 0.579843 \\
      DCP-v2~\cite{wang2019deep} & 162.658981 & 12.753783 & 9.382932 & -0.001747 & 0.020423 & 0.142911 & 0.102363 & 0.750923 \\
      PRNet~\cite{wang2019prnet} & 37.737026 & 6.143047 & 3.812709 & 0.769193 & 0.016386 & 0.128009 & 0.085720 & 0.799919 \\
      RPMNet~\cite{yew2020rpmnet} & 2.580819 & 1.606493 & 0.929659 & 0.984234 & 0.001380 & 0.037144 & 0.020607 & 0.983145 \\
      DeepGMR~\cite{yuan2020deepgmr} & 268.978424 & 16.400562 & 12.519497 & -0.616133 & 0.375041 & 0.612406 & 0.438145 & -3.575286 \\
      PNLK-R~\cite{li2021pointnetlk} & 180.355194 & 13.429639 & 8.640598 & -0.294930 & 0.238732 & 0.488602 & 0.309280 & -2.454140 \\
      Ours (DCP-v2) & \bold 76.291809 & \bold 8.734518 & \bold 5.678668 & \bold 0.534442 & 0.022302 & 0.149339 & \bold 0.097397 & 0.728618 \\
      Ours (PRNet) & 39.333748 & 6.271662 & \bold 3.285756 & 0.762722 & \bold 0.011383 & \bold 0.106691 & \bold 0.067011 & \bold 0.861624 \\
      Ours (RPMNet) & \first \bold 0.940908 & \first \bold 0.970004 & \first \bold 0.481255 & \first \bold 0.994201 & \first \bold 0.000682 & \first \bold 0.026109 & \first \bold 0.012381 & \first \bold 0.991713 \\
    \end{tblr}}
\end{table}

\paragraph{Registration on ICL-NUIM}

Next, we perform an experiment with the augmented ICL-NUIM dataset~\cite{choi2015robust} based on simulated scans of synthetic indoor scenes. We start with the preprocessed data provided by the authors of DeepGMR~\cite{yuan2020deepgmr}, which consists of 1278 training samples and 200 evaluation samples. Points in this dataset, which are synthesized from depth images with known camera poses, do not have normals. Therefore, we calculate the point normals using a function provided by Open3D~\cite{zhou2018open3d}. After that, we process the point clouds as performed for ModelNet40-CPU to validate the effectiveness of the partial and unduplicated point clouds. However, we did not perform the composition because the sufficient geometric complexity of point clouds in ICL-NUIM could avoid the ambiguity of rigid transformation without compositing.

\Cref{tab:iclnuim} shows the results of comparison for ICL-NUIM, based on the same metrics used in \cref{tab:modelnet40}. Even with ICL-NUIM, which is more similar to real scans, the tendency of evaluation scores is approximately the same as those for ModelNet40-CPU based on artificial CAD models. Our extensions mostly outperform the base methods, and the extension over RPMNet~\cite{yew2020rpmnet} performs the best. The ICL-NUIM dataset includes simulated sensor noise, and the point normals are estimated with such noisy point positions. Therefore, the good performance of our method for ICL-NUIM demonstrates the robustness of our method for realistic scans with noisy point positions and estimated normals.


\paragraph{Computational complexity}

Compared to the point-to-point registration, which can be solved using SVD, the point-to-plane registration spends more time and memory due to the iterative accumulation. Specifically, the forward pass requires more computation for the iteration, and the backward pass requires more computation for matrix inversions in \cref{eq:gradients}. However, the dominant computation both in training and testing is the evaluation of the neural network. As shown in \cref{tab:performance-vs-svd}, the additional memory that is required to install our method is almost negligible, and the additional time will be compromised considering the improvement of their performance.

\begin{table}[t]
  \centering
  \caption{The amount of computation time and memory that are additionally required when we replace the SVD-based point-to-point registration module in the base methods with our point-to-plane registration module.}
  \label{tab:performance-vs-svd}
  {\fontsize{6pt}{7pt}\selectfont
  \begin{tblr}{%
    colspec={X[l]X[c]X[c]X[c]X[c]X[c]X[c]},%
    hline{1,5}={0.8pt},%
    hline{2}={2-7}{0.3pt},%
    hline{3}={0.3pt},%
    vline{2,4,6}={0.3pt},%
    vline{3,5,7}={0.3pt,dotted},%
    colsep=1.5pt,%
    rowsep=0.2pt,%
  }
   & \SetCell[c=2]{c} DCP-v2~\cite{wang2019deep} & & \SetCell[c=2]{c} PRNet~\cite{wang2019prnet} & & \SetCell[c=2]{c} RPMNet~\cite{yew2020rpmnet} & \\
   & train & test & train & test & train & test \\
  Time & \qty{17.5}{\percent} & \qty{23.3}{\percent} & \qty{27.7}{\percent} & \qty{46.2}{\percent} & \qty{15.5}{\percent} & \qty{20.4}{\percent} \\
  Memory & \qty{0.50}{\percent} & \qty{0.27}{\percent} & \qty{1.15}{\percent} & \qty{0.86}{\percent} & \qty{0.40}{\percent} & \qty{0.38}{\percent} \\
  \end{tblr}}
\end{table}

\begin{figure}[t]
  \centering
  \includegraphics[width=\linewidth]{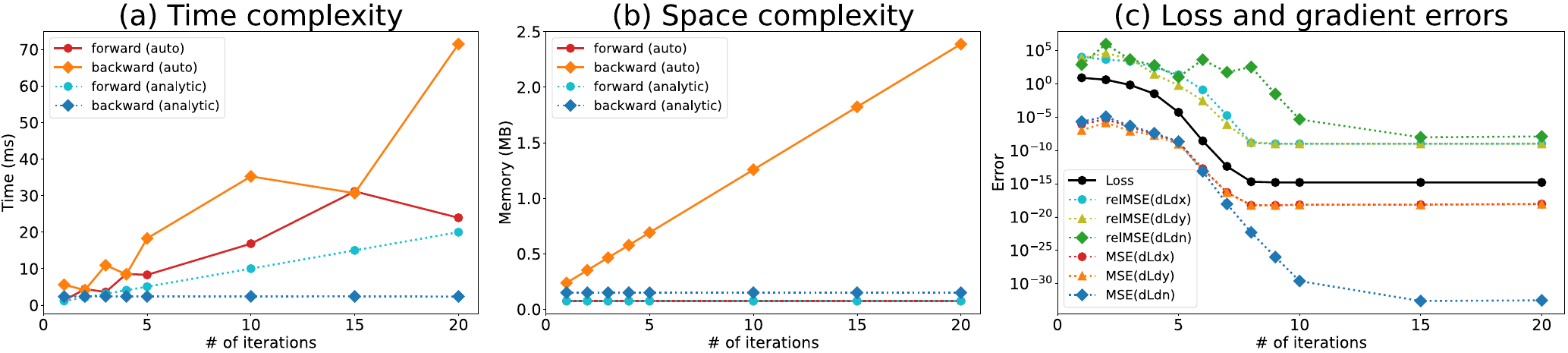}
  \caption{\textbf{\textsf{(a), (b)}} Comparison of time and space complexity using a point cloud with 1024 points.
  \textbf{\textsf{(c)}} The rigid motion loss and numeric errors for its gradients are compared between our analytic differentiation and auto-differentiation.
  }
  \label{fig:performance}
\end{figure}

\paragraph{Analytic differentiation vs auto-differentiation}

We compare the time and space complexity of our analytic differentiation on point-to-plane registration with simple auto-differentiation using a large computation graph given as a result of the iterative accumulation. The experiment uses a single point cloud with 1024 points and measures the time and graphics memory consumed by either analytic or automatic differentiation. Figures~\ref{fig:performance}(a) and \ref{fig:performance}(b) show the results. In the forward pass, the time performance of the analytic differentiation is higher than that of the auto-differentiation because the auto-differentiation involves slight computational overhead to prepare the computation graph for backpropagation, whereas the graphics memory used by each method is exactly the same. In contrast, the time and memory consumption in the backward pass differ. The analytic differentiation reduces required time and memory significantly compared to the auto-differentiation. For example, when the iterative accumulation is repeated ten times, the analytic differentiation is 14.8$\times$ and 8.4$\times$ more efficient in time and in memory than the auto-differentiation, respectively. Especially for training, the time overhead with the auto-differentiation is estimated as $\qty{15.5}{\%} \times 14.8 = \qty{229.4}{\%}$. Thus, the analytic differentiation is essential to efficient training. 

\paragraph{Number of iterations and numeric errors}

Our analytic differentiation requires to get \cref{eq:plane-icp-energy} converge to either local or global minimum because \cref{prop:diffargmin} says $\frac{\partial E(\mathbf{g})}{\partial \mathbf{g}} = \mathbf{0}$ is a necessary condition to obtain \cref{eq:diff-argmin}. However, it is not always satisfied in practice and causes numeric errors, depending on the number of iterations of accumulating small-angle rigid transformations. Therefore, we evaluate the numeric errors of the gradients (i.e., $\frac{\partial\mathcal{L}}{\partial\mathbf{x}_i}$, $\frac{\partial\mathcal{L}}{\partial\mathbf{y}_i}$, and $\frac{\partial\mathcal{L}}{\partial\mathbf{n}_i}$) in mean square error (MSE) and relative MSE (relMSE) by comparing those obtained by the auto-differentiation and our analytic differentiation. As shown by \cref{fig:performance}(c), the numeric errors decrease with the number of iterations, and when we repeat the accumulation ten times, they are sufficiently small and approximately converged. As shown by all these experiments, ten iterations are sufficient to obtain better performances than the base methods.

\paragraph{Point-to-plane registration for only testing}

We evaluate the necessity of training the neural network with the point-to-\textit{plane} registration module because it could be enough if that is used only in evaluation after the network is trained with the point-to-\textit{point} registration module. For each of DCP-v2~\cite{wang2019deep}, PRNet~\cite{wang2019prnet}, and RPMNet~\cite{yew2020rpmnet}, we use the network weights trained by ModelNet40-CPU and tested them by replacing their point-to-\textit{point} registration module with the point-to-\textit{plane} registration module. As shown by \cref{tab:plane-test}, the performances are degraded compared to those of our extensions in \cref{tab:modelnet40} and are further worse than those trained and tested consistently with the point-to-point registration module. Therefore, training with the point-to-plane registration module is indispensable for the good performance of our method.

\begin{table}[t]
  \centering
  \caption{Quantitative evaluation on ModelNet40-CPU for the networks trained with the point-to-\textit{point} registration but tested with the point-to-\textit{plane} registration. This table shows the necessity of training using the point-to-plane registration module.}
  \label{tab:plane-test}
  {\tiny
    \begin{tblr}{%
      colspec={l | M[c]N[c]N[c]N[c] | N[c]N[c]N[c]N[c]},%
      colsep=3pt,%
      rowsep=0.3pt,%
      hline{1,5}={0.8pt},
      hline{2}={0.3pt},
      width=\linewidth
    }
      Model & {{{MSE($\mathbf{R}$)$\downarrow$}}} & {{{RMSE($\mathbf{R}$)$\downarrow$}}} & {{{MAE($\mathbf{R}$)$\downarrow$}}} & {{{$\mathrm{R}^2$($\mathbf{R}$)$\uparrow$}}} & {{{MSE($\mathbf{t}$)$\downarrow$}}} & {{{RMSE($\mathbf{t}$)$\downarrow$}}} & {{{MAE($\mathbf{t}$)$\downarrow$}}} & {{{$\mathrm{R}^2$($\mathbf{t}$)$\uparrow$}}} \\
      DCP-v2~\cite{wang2019deep} & 154.640244 & 12.435443 & 8.170769 & 0.089408 & 0.045395 & 0.213062 & 0.146939 & 0.459433 \\
      PRNet~\cite{wang2019prnet} & 33.430279 & 5.781892 & 3.431526 & 0.803085 & 0.008348 & 0.091366 & 0.066771 & 0.900555 \\
      RPMNet~\cite{yew2020rpmnet} & 20.276587 & 4.502953 & 1.751912 & 0.880535 & 0.003380 & 0.058135 & 0.026753 & 0.959721 \\
    \end{tblr}}
\end{table}

\paragraph{Limitation}

The first limitation is that our method requires point normals. However, our method works well with point normals estimated from point positions, as demonstrated with ICL-NUIM. Furthermore, we may obtain more performance using higher-quality point normals estimated by recent deep-learning-based methods~\cite{boulch2016deep,shabat2019nestinet}. The second limitation is that our method does not work well with simple shapes, such as a flat plane, because of the ambiguity of transformation that we discussed previously. Nevertheless, this would not be a problem in practice because real-world point clouds are significantly more complicated.

\section{Conclusion}
\label{sec:conclusion}

In this paper, we presented an efficient backpropagation method for rigid transformations obtained via error minimization, achieving a new application of neural networks to point-to-plane registration. As we demonstrated by the experiments, our point-to-plane registration module has improved the performances of the previous methods based on point-to-point registration. Although our efficient backpropagation has been tested with a point-to-plane registration based on the iterative accumulation of small rotations and translations, the basic idea will be useful for registration algorithms. For future work, we would like to investigate the applications to the algorithms minimizing other distances such as plane-to-plane distances~\cite{segal2009generalized} and symmetric distances~\cite{rusinkiewicz2019symmetric}.

\clearpage
%
%
\bibliographystyle{splncs}
\bibliography{references}

\clearpage

\fi
\ifsupl


\title{Deep Point-to-Plane Registration by Efficient Backpropagation for Error Minimizing~Function}
\ifdraft
  \titlerunning{ACCV-22 submission ID \ACCVSubNumber}
  \authorrunning{ACCV-22 submission ID \ACCVSubNumber}
  \author{Anonymous ACCV submission}
  \institute{Paper ID \ACCVSubNumber}
\else
  \titlerunning{Deep Point-to-Plane Registration}
  \author{Tatsuya Yatagawa
    \and Yutaka Ohtake \and Hiromasa Suzuki}
  \authorrunning{T. Yatagawa et al.}
  %
  \institute{School of Engineering, The University of Tokyo, Tokyo, Japan
    \email{\{tatsy,ohtake,suzuki\}@den.t.u-tokyo.ac.jp}}
\fi

\makeatletter
\let\maintitle\@title
\renewcommand{\@title}{\maintitle\texorpdfstring{\\}{}---Supplementary Document---}
\makeatother

\setcounter{page}{1}
\setcounter{section}{0}
\setcounter{equation}{0}
\setcounter{figure}{0}
\setcounter{table}{0}
\renewcommand{\thesection}{\Alph{section}}
\renewcommand{\theequation}{\Alph{section}\arabic{equation}}
\renewcommand{\thefigure}{\Alph{section}\arabic{figure}}
\renewcommand{\thetable}{\Alph{section}\arabic{table}}
\counterwithin*{equation}{section}
\counterwithin*{figure}{section}
\counterwithin*{table}{section}

\backrefsetup{disable}

\ifdraft
  \cvprrulercount=0
\fi

\crefname{section}{Appendix}{Appendix}

\renewcommand{\algorithmicrequire}{\textbf{Input:}}
\renewcommand{\algorithmicensure}{\textbf{Output:}}
\newcommand{\hyphen}{\text{-}}

\maketitle

\section{Implementation on PRNet}
\label{sec:plane-prnet}

For partial-to-partial point set registration, DCP~\cite{wang2019deep} is extended by PRNet~\cite{wang2019prnet} that employs actor-critic closest point network (ACNet) to control the softness of pointers between source and target point clouds (i.e., $\mathcal{X}$ and $\mathcal{Y}$, respectively). \Cref{algo:prnet} shows an outline of the algorithm that extends PRNet using our point-to-plane registration module. When we implement the point-to-plane registration module over PRNet, both the positions and normals of points are supplied to the feature extractor, i.e., Siamese DGCNN~\cite{wang2019dynamic} and Transformer~\cite{vaswani2017attention}, as the extension for DCP-v2. By denoting the feature extractor as $F$, the features for $\mathcal{X}$ and $\mathcal{Y}$ are given by
\begin{equation}
  \Phi_{\mathcal{X}} = F(\mathcal{X}), \quad
  \Phi_{\mathcal{Y}} = F(\mathcal{Y}).
\end{equation}
Since the number of points in $\mathcal{X}$ and $\mathcal{Y}$ are not always the same in partial-to-partial registration, PRNet employs simple keypoint extraction, where $K$ points are chosen as keypoints from each point cloud in ascending order of the $L_2$ norm of a feature vector. We henceforth denote the positions and normals of these keypoints with \textit{hat} signs, such as $\hat{\mathcal{X}}$ and $\hat{\mathcal{Y}}$. Then, the softness $\tau$ is defined by another neural network, referred to as the temperature network. By denoting the temperature network as $T$, we obtain
\begin{equation}
  \tau = T(\Phi_{\hat{\mathcal{X}}}, \Phi_{\hat{\mathcal{Y}}}).
\end{equation}
PRNet defines hard point-to-point correspondences using Gumbel--Softmax, rather than the ordinary softmax operation used by DCP. The weight parameters $\hat{\mathbf{c}}_i$, which are used to define the pointers from $\hat{\mathbf{x}}_i$ to $\hat{\mathcal{Y}}$, is defined as
\begin{equation}
  \hat{\mathbf{c}}_i = \text{one-hot} \left[ \argmax_j \text{softmax} \left( \frac{\Phi_{\hat{\mathcal{Y}}} \Phi_{\hat{\mathbf{x}}_i}^\top + \mathbf{q}_{i}}{\tau} \right) \right],
  \label{eq:gumbel-softmax}
\end{equation}
where $q_{i1}, q_{i2}, \ldots, q_{iN}$ (i.e., the entries of $\mathbf{q}_i$) are independently and identically distributed (i.i.d.) samples drawn from the standard Gumbel distribution (i.e., $\mathrm{Gumbel}(0, 1)$). The pointers from each $\hat{\mathbf{x}}_i$ to $\hat{\mathcal{Y}}$ is defined equivalently with that in the extension of DCP-v2, regardless of the hardness of $\hat{\mathbf{c}}_i$. The pointer to a position $\hat{\mathbf{y}}_i$ is defined by the weighted sum of positions in $\hat{\mathcal{Y}}$, whereas that to a normal $\hat{\mathbf{n}}_i$ is defined using the weighted sum of normal tensors.

\begin{algorithm}[t]
  \caption{Point-to-plane PRNet}
  \label{algo:prnet}
  \begin{algorithmic}[1]
    \Require{$\mathcal{X} = \{ (\mathbf{x}_i, \mathbf{m}_i) \}_{i=1}^N$, $\mathcal{Y} = \{ (\mathbf{y}_j, \mathbf{n}_j) \}_{j=1}^M$, and $N_{ac\hyphen iter}$.}
    \Ensure{$\mathbf{R}^{*}$ and $\mathbf{t}^{*}$.}
    \State $\mathbf{R}^{*} = \mathbf{I}_{[3]}$, $\mathbf{t}^{*} = \mathbf{0}_{[3\times 1]}$.
    \For{$i=1, \ldots, N_{ac\hyphen iter}$}
    \State $\Phi_{\mathcal{X}} = F(\mathcal{X})$
    \State $\Phi_{\mathcal{Y}} = F(\mathcal{Y})$
    \State $\hat{\mathcal{X}} = \mathcal{X}(\text{topk}(\| \Phi_{\mathbf{x}_1} \|_2, \ldots, \| \Phi_{\mathbf{x}_N} \|_2))$
    \State $\hat{\mathcal{Y}} = \mathcal{Y}(\text{topk}(\| \Phi_{\mathbf{y}_1} \|_2, \ldots, \| \Phi_{\mathbf{y}_M} \|_2))$
    \State $\tau = T(\Phi_{\hat{\mathcal{X}}}, \Phi_{\hat{\mathcal{Y}}})$
    \State Calculate $\hat{\mathbf{c}}_i$ by \cref{eq:gumbel-softmax} and soft pointers from $\hat{\mathcal{X}}$ to $\hat{\mathcal{Y}}$
    \State Compute $\mathbf{R}_i, \mathbf{t}_i$ by point-to-plane registration (i.e., iterative accumulation)
    \State $\mathcal{X} \leftarrow \{ (\mathbf{R}_i \mathbf{x} + \mathbf{t}_i, \mathbf{R}_i \mathbf{m}_i) : \mathbf{x} \in \mathcal{X} \}$
    \State $\mathbf{R}^{*} \leftarrow \mathbf{R}_i \mathbf{R}^{*}$
    \State $\mathbf{t}^{*} \leftarrow \mathbf{R}_i \mathbf{t}^{*} + \mathbf{t}_i$
    \EndFor
  \end{algorithmic}
\end{algorithm}

Finally, we apply our point-to-plane registration module to the pairs of corresponding keypoints defined by the above procedure. In addition to the iterative accumulation for point-to-plane registration, we perform several actor-critic iterations. To obtain the results shown in the main text, we performed two actor-critic iterations for training and six iterations for testing, while the iterative accumulation is repeated ten times in each actor-critic iteration. The soft pointers are updated after each actor-critic iteration by applying the rigid transformation obtained by the registration module to the source points $\mathcal{X}$.

As we mentioned in the main text, we assume the rigid transformation obtained by each actor-critic iteration is a function of network parameters. Since the positions and normals of $\mathcal{X}$ is updated after the actor-critic iteration, those used in the second and later iterations are dependent on the network parameters. Therefore, we need the gradients of our point-to-plane registration module with respect also to $\mathbf{x}_i$. \Cref{prop:diffargmin} can derive the analytic gradient of rigid transformation $\mathbf{g}$ with respect also to $\mathbf{x}_i$ as
\begin{align}
  \frac{\partial \mathbf{g}^{*}}{\partial\mathbf{x}_i} & = -\left( \frac{\partial^2 \hat{E}(\mathbf{g}^{*})}{\partial \mathbf{g}^2} \right)^{-1} \frac{\partial^2 \hat{E}(\mathbf{g}^{*})}{\partial\mathbf{x}_i \partial\mathbf{g}} \in \mathbb{R}^{12 \times 3}, \\
  \frac{\partial^2 \hat{E}(\mathbf{g}^{*})}{\partial\mathbf{x}_i \partial\mathbf{g}}
  & = 2 (\hat{\mathbf{n}}_i \odot \hat{\mathbf{x}}_i) (\mathbf{R}^\top \mathbf{n}_i )^\top + 2 \begin{bmatrix}
    (((\mathbf{R}\mathbf{x}_i + \mathbf{t} - \mathbf{y}_i) \cdot \mathbf{n}_i) \mathbf{n}_i ) \otimes \mathbf{I}_{[3]} \\
    \mathbf{0}_{[3 \times 3]}
  \end{bmatrix} \in \mathbb{R}^{12 \times 3}.
\end{align}
Using this formula, as well as \cref{eq:gradients}, we can efficiently backpropagate the loss through the point-to-plane registration module, even when used with PRNet.

\section{Implementation on RPMNet}
\label{sec:plane-rpmnet}

RPMNet~\cite{yew2020rpmnet} is another approach to partial-to-partial registration that applies a deep neural network to the robust point matching~\cite{gold1998rpm}. In addition to the soft pointers between $\mathcal{X}$ and $\mathcal{Y}$, RPMNet calculates the reliability weights for the correspondences. We obtain the best rigid transformation by the weighted SVD using the reliability weights. To extend RPMNet using the point-to-plane registration module, we update the definition of the match matrix, which is used to define point correspondences, and replace the weighted SVD with the registration module that also uses weights. Although we modified the feature extractors of DCP-v2 and PRNet to receive point normals, we did not modify that of RPMNet for the point-to-plane extension because RPMNet has already used point normals to compute PPFs, which is a part of input features.

\subsection{Hard match matrix}

RPMNet defines soft correspondences between point clouds $\mathcal{X}$ and $\mathcal{Y}$ by a match matrix $\mathbf{Z} \in \mathbb{R}^{N \times M}$, where $N$ and $M$ are the numbers of points in $\mathcal{X}$ and $\mathcal{Y}$, respectively. A matrix entry $z_{ij}$ of $\mathbf{Z}$ represents the significance of soft assignment from $\mathbf{y}_j$ to $\mathbf{x}_i$, which is defined as
\begin{equation}
  z_{ij} = \exp (-\beta \| \mathbf{R} \mathbf{x}_i + \mathbf{t} - \mathbf{y}_j \|_2^2 + \alpha),
  \label{eq:match-matrix}
\end{equation}
where $\alpha$ is a parameter to control the number of correspondences rejected as outliers, and $\beta$ is the annealing parameter to control the hardness of correspondences. A point $\mathbf{y}'_i$, which is a soft pointer from $\mathbf{x}_i$, is then defined as
\begin{equation}
  \mathbf{y}'_i = \frac{1}{\sum_{j=1}^M z_{ij}} \sum_{j=1}^M z_{ij} \mathbf{y}_j
  \label{eq:rpm-weight-average}
\end{equation}
Straightforwardly, we can also define a soft pointer to normals by weighting normal tensors analogously to \cref{eq:rpm-weight-average}, as we explained in the main text. However, we found that the values of ${z}_{ij}$ were almost uniform when the network to obtain $\alpha$ and $\beta$ is not trained sufficiently, which can diminish the anisotropy of normal vectors and cause ambiguity in the best rigid transformation.

To alleviate this problem, we reformulate the weight average in \cref{eq:rpm-weight-average}. Let us define a vector $\mathbf{u}_i$, where $u_{ij} = -\beta \| \mathbf{R} \mathbf{x}_i + \mathbf{t} - \mathbf{y}_j \|_2^2 + \alpha$. Then, we can rewrite \cref{eq:rpm-weight-average} using the softmax operation:
\begin{equation}
  \mathbf{y}'_i = \sum_{j=1}^{M} c_{ij} \mathbf{y}_j, \quad \mathbf{c}_{i} = \text{softmax}(\mathbf{u}_i).
\end{equation}
Although this equation means exactly the same as \cref{eq:rpm-weight-average}, we modify the weights $\mathbf{c}_{i}$ using Gumbel--Softmax:
\begin{equation}
  \mathbf{c}_i = \text{one-hot} \left[ \argmax_j \text{softmax} \left( \frac{\mathbf{u}_i + \mathbf{q}_i}{\tau} \right) \right],
\end{equation}
where $\mathbf{q}_i$ is a vector with i.i.d. samples from $\text{Gumbel}(0, 1)$, and $\tau$ is a parameter to control the softness. In RPMNet, we use the constant $\tau = 1$ because the softness has already been controlled by $\beta$. The hard correspondences defined by Gumbel--Softmax prevents normals from being averaged to the same direction and allows the point-to-plane registration module to calculate rigid transformation appropriately.

\subsection{Weighted point-to-plane registration}

Following the original RPMNet, we define the reliability weight for $i$th  point pair as $\zeta_i = \sum_{j=1}^M z_{ij}$. Using this reliability, we redefine the point-to-plane error function in \cref{eq:plane-icp-energy} as
\begin{equation}
  E(\mathbf{R}, \mathbf{t}) = \sum_{i=1}^N \zeta_i \left( (\mathbf{R} \mathbf{x}_i + \mathbf{t} - \mathbf{y}_i) \cdot \mathbf{n}_i \right)^2.
  \label{eq:weighted-plane-icp-error}
\end{equation}
We can minimize this error function by linearizing the rotation matrix as in the main text. The linear system in \cref{eq:matrix-inverse} is updated with
\begin{align}
  \mathbf{A} & = \sum_{i=1}^{N} \zeta_i \left(
  \begin{bmatrix}
      \mathbf{x}_i \times \mathbf{n}_i \\
      \mathbf{n}_i
    \end{bmatrix}
  \begin{bmatrix}
      \mathbf{x}_i \times \mathbf{n}_i \\
      \mathbf{n}_i
    \end{bmatrix}^\top
  \right) \in \mathbb{R}^{6 \times 6}, \\
  \mathbf{b} & = \sum_{i=1}^N \zeta_i \left(
  \begin{bmatrix}
    \mathbf{x}_i \times \mathbf{n}_i \\
    \mathbf{n}_i
  \end{bmatrix}
  (\mathbf{y}_i - \mathbf{x}_i) \cdot \mathbf{n}_i \right) \in \mathbb{R}^{6}.
\end{align}
Finally, we obtain the rigid transformation by solving this linear system.

For the extension above of RPMNet, we need to calculate the analytic gradient of the rigid transformation, which is obtained by the point-to-plane registration, with respect also to $\zeta_i$ because the weights are obtained by a neural network. \Cref{prop:diffargmin} derives the analytic gradient of rigid transformation $\mathbf{r}$ with respect to $\zeta_i$ as
\begin{align}
  \frac{\partial\mathbf{g}^{*}}{\partial \zeta_i} & = - \left( \frac{\partial^2 \hat{E}(\mathbf{g}^{*})}{\partial\mathbf{g}^2} \right)^{-1} \frac{\partial^2 \hat{E}(\mathbf{g}^{*})}{\partial \zeta_i \partial \mathbf{g}}, \\
  \frac{\partial^2 \hat{E}(\mathbf{g}^{*})}{\partial \zeta_i \partial\mathbf{g}} & = 2 (\hat{\mathbf{n}}_i \odot \hat{\mathbf{x}}_i) \left( (\mathbf{R} \mathbf{x}_i + \mathbf{t} - \mathbf{y}_i) \cdot \mathbf{n}_i \right) \in \mathbb{R}^{12 \times 1}
\end{align}
Based on the updated error function in \cref{eq:weighted-plane-icp-error}, we also update \cref{eq:ddEdrdr} as
\begin{align}
  \frac{\partial^2 \hat{E}(\mathbf{g}^{*})}{\partial \mathbf{g}^2} & = 2 \sum_{i=1}^N \zeta_i (\hat{\mathbf{n}}_i \odot \hat{\mathbf{x}}_i ) (\hat{\mathbf{n}}_i \odot \hat{\mathbf{x}}_i )^\top + 4 \lambda \begin{bmatrix}
    \mathbf{M}                & \mathbf{0}_{[9\times 3]} \\
    \mathbf{0}_{[3 \times 9]} & \mathbf{0}_{[3\times 3]}
  \end{bmatrix} \in \mathbb{R}^{12 \times 12}.
\end{align}
In addition, we update $\frac{\partial^2 \hat{E}}{\partial \mathbf{x}_i \partial\mathbf{g}}$, $\frac{\partial^2 \hat{E}}{\partial \mathbf{y}_i \partial\mathbf{g}}$, and $\frac{\partial^2 \hat{E}}{\partial \mathbf{n}_i \partial\mathbf{g}}$ by multiplying $\zeta_i$ to each of them. In contrast, we update the penalty parameter $\lambda$ by multiplying $\zeta_i$ to each summand in \cref{eq:dEdr}.

\section{Additional Experiments}
\label{apdx:additional-experiments}

Here, we conduct further experiments to elaborate on the differences between the base methods (i.e., DCP-v2, PRNet, and RPMNet) by point-to-point registration and their extensions by point-to-plane registration.

First, we show the cumulative distribution functions (CDFs) of three metrics (i.e., RMSE of rotation angles ($\mathbf{R}$), RMSE of translation vectors ($\mathbf{t}$), and modified chamfer distances~\cite{yew2020rpmnet}) in \cref{fig:modelnet-cdf,fig:iclnuim-cdf}. These two figures show the charts for ModelNet40-CPU and ICL-NUIM, respectively. In each chart, a point $(x, y)$ on the curve indicates that the values based on the metric are less than $x$ for $100y$\,\% of models. Therefore, the curve of a method drawn higher than that of another one means better performance. As shown in these charts, the curves of our extensions are located above the corresponding base methods when the values in \cref{tab:modelnet40,tab:iclnuim} are better than those of the base methods. Compared to the single scalar values in \cref{tab:modelnet40,tab:iclnuim}, these results are more informative and also support the advantage of our method over the base methods by point-to-point registration.

\begin{figure}[t]
  \centering
  \includegraphics[width=\linewidth]{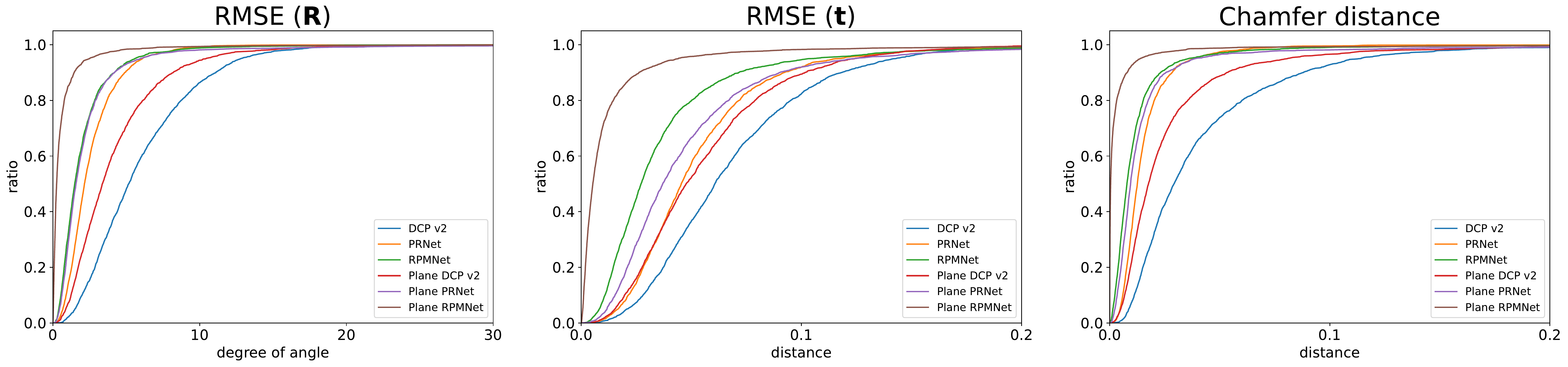}
  \caption{CDF of three metrics (i.e., RMSE ($\mathbf{R}$), RMSE ($\mathbf{t}$), and modified chamfer distance~\cite{yew2020rpmnet}) on the test set of the ModelNet40-CPU dataset. In these charts, the curve of a method drawn higher than that of another means the better performance.}
  \label{fig:modelnet-cdf}
\end{figure}

\begin{figure}[t]
  \centering
  \includegraphics[width=\linewidth]{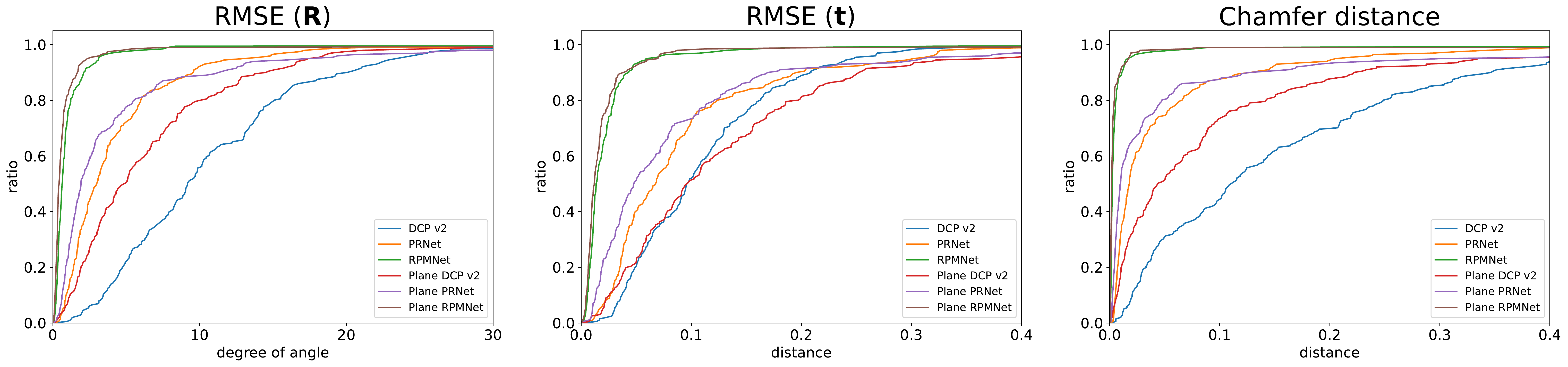}
  \caption{CDF of three metrics (i.e., RMSE ($\mathbf{R}$), RMSE ($\mathbf{t}$), and modified chamfer distance~\cite{yew2020rpmnet}) on the test set of the ICL-NUIM dataset. In these charts, the curve of a method drawn higher than that of another means the better performance.}
  \label{fig:iclnuim-cdf}
\end{figure}

\begin{figure}[t]
  \centering
  \begin{tblr}{colspec={X[c]X[c]}}
    \includegraphics[width=\linewidth]{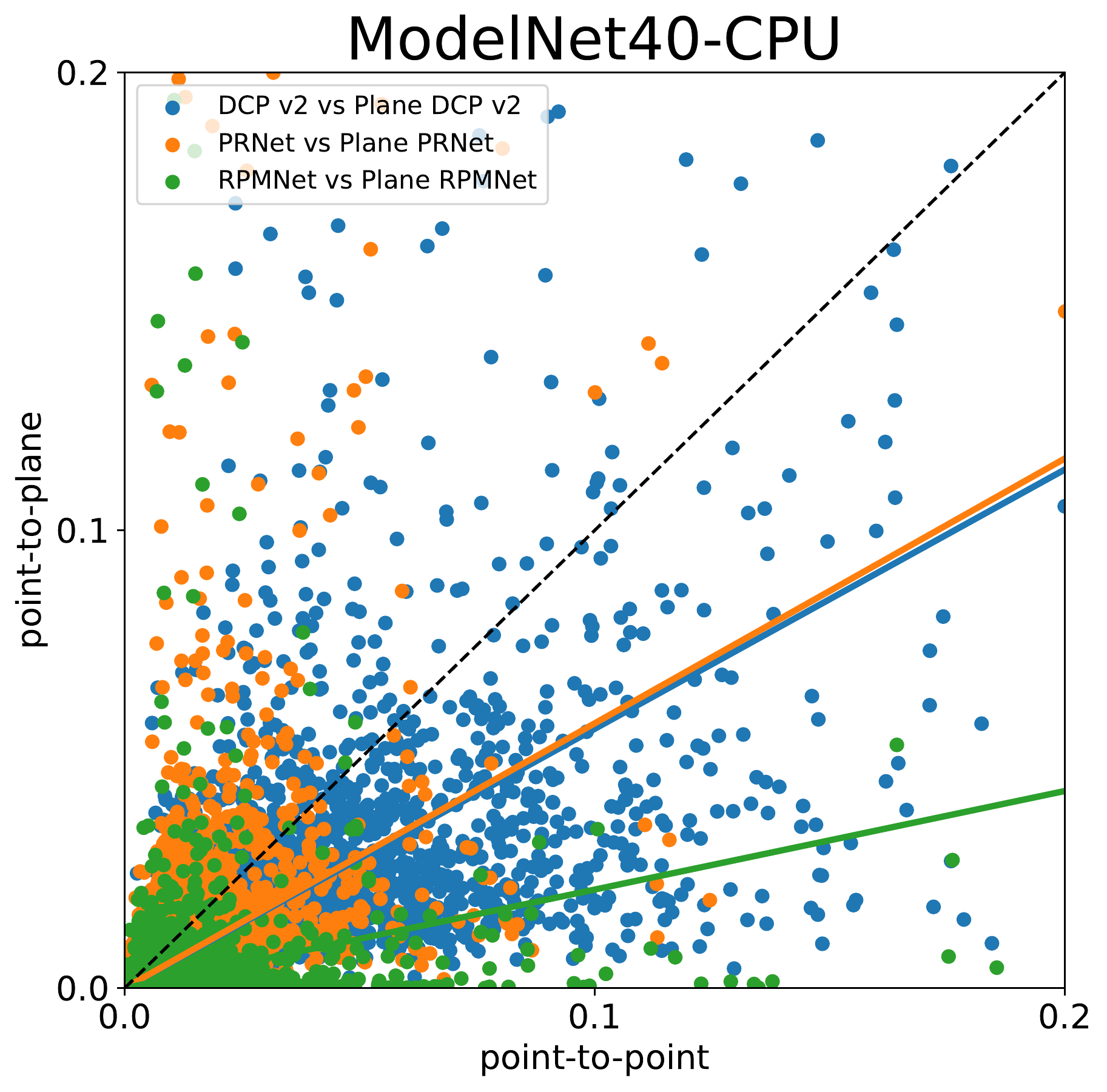} &
    \includegraphics[width=\linewidth]{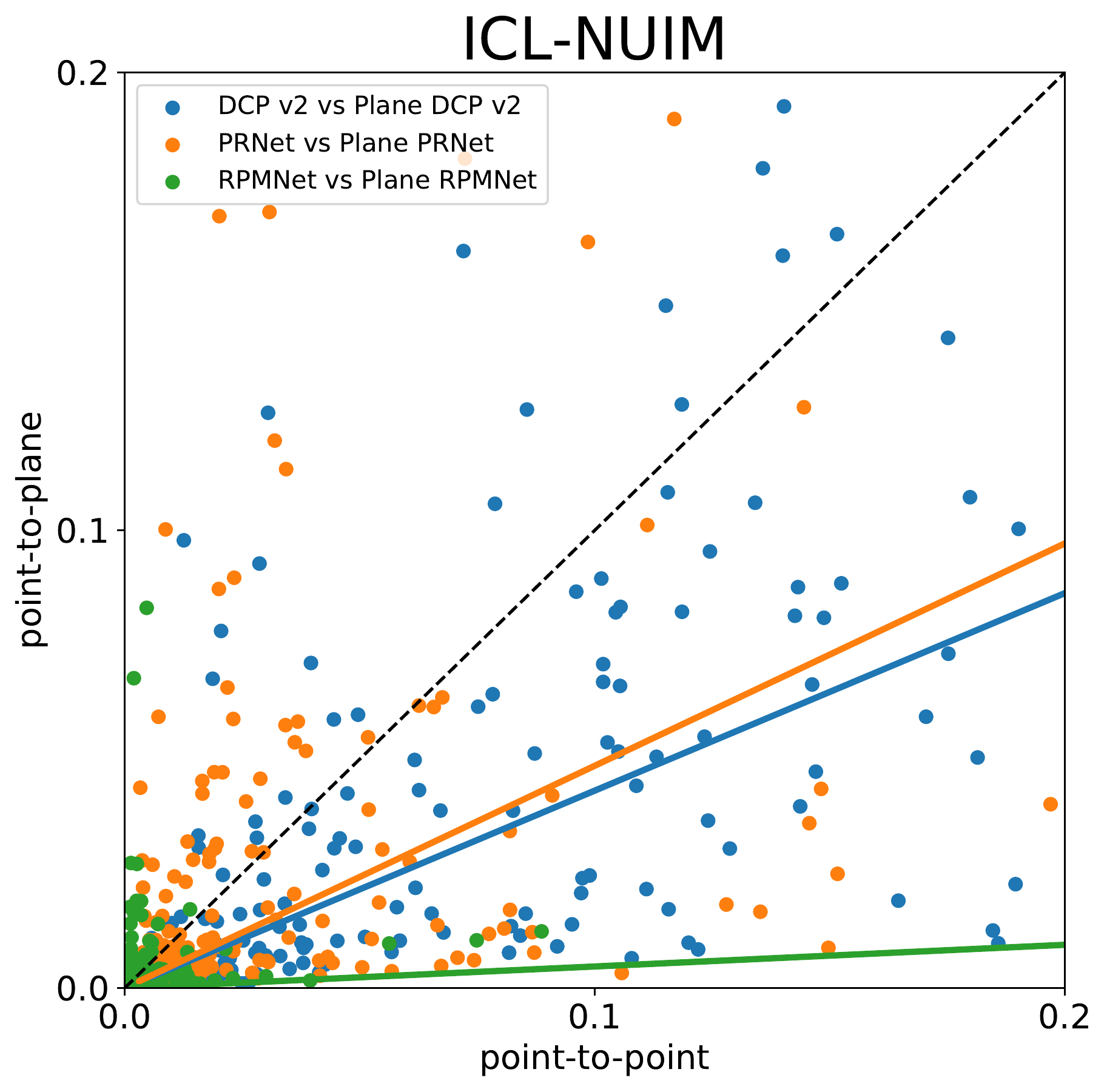} \\
  \end{tblr}
  \caption{Scatter plots for the relationship of performances (i.e., the modified chamfer distances) between the base point-to-point registration method and its point-to-plane extension. The regression line for each pair of base and extension methods is colored equally with those of the dots.}
  \label{fig:point-vs-plane-scatter}
\end{figure}

Second, we conduct a model-level analysis on performance improvement. In \cref{fig:point-vs-plane-scatter}, we represent each point cloud with a dot whose $x$ and $y$ coordinates indicate the chamfer distance for a base point-to-point registration method and its corresponding point-to-plane extension. We also draw regression lines passing through the origin. A regression line corresponds to the plot with the same color. The regression lines located under the black diagonal line mean the better performance of the extensions. On the other hand, these plots for ModelNet40-CPU and ICL-NUIM show a weak correlation of the performances, where the models which are aligned properly by the base methods are not always aligned appropriately by the point-to-plane registration.

Third, we investigate what types of models are more and less favorable for point-to-plane registration. This experiment uses both ModelNet40-CPU and the original ModelNet40 without compositing to clearly show whether point clouds are aligned or not. Since the point clouds, for which the point-to-plane registration obtains better performance, differ depending on the base methods, we show the different point clouds for each base method in \cref{fig:modelnet-cpu-better,fig:modelnet-better}. As shown in these figures, the point-to-plane extensions are often advantageous when the overlapping region of two input point clouds involves a sufficient variety of normal orientations. Although these methods are based on deep learning, this behavior is consistent with the fact reported in the literature on traditional point-to-plane registration~\cite{gelfand2003geometrically}. In addition, the extension for RPMNet performs better than the original RPMNet, particularly when input point clouds have thin structures, such as the foot of an office chair and branches of a plant in \cref{fig:modelnet-better}. Although the performance of the original RPMNet is sufficiently high, the point-to-plane extension makes it more robust to the random displacement of points over the underlying object surface because the point-to-plane registration cares only about the displacement along the point normals.

\begin{figure}[!h]
  \centering
  \includegraphics[width=\linewidth]{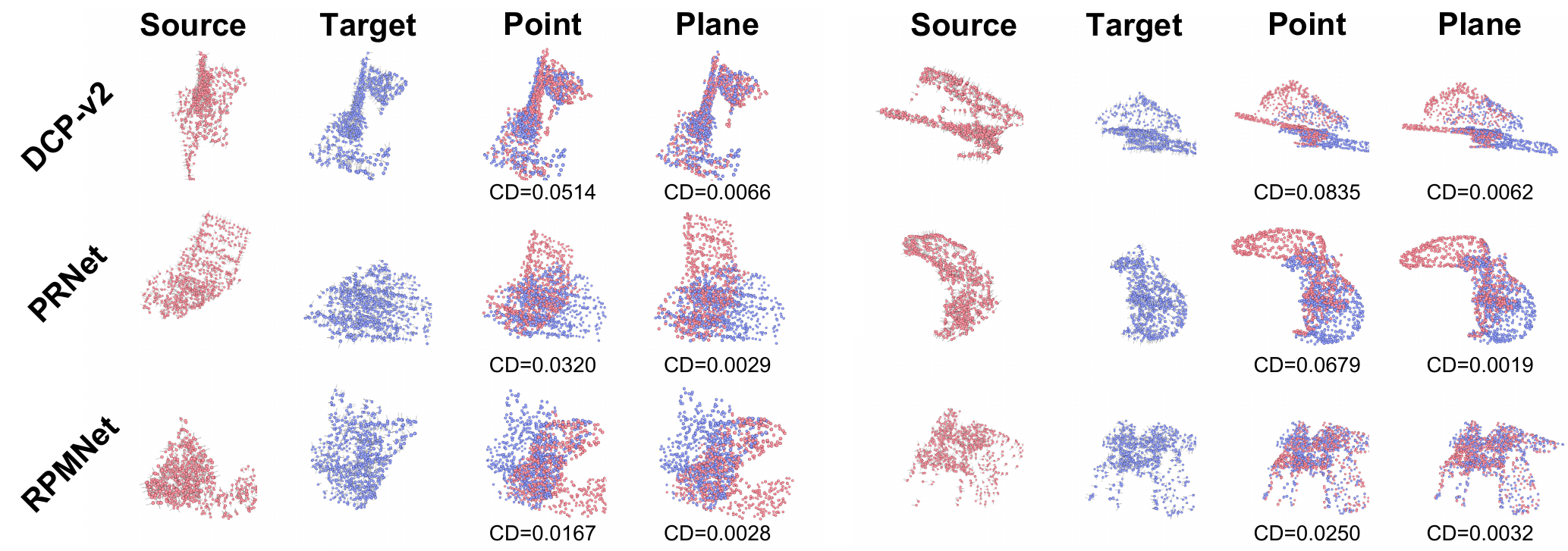}
  \caption{Selected point clouds from ModelNet40-CPU, for which the point-to-plane registration performs better than the base point-to-point registration. The modified chamfer distance (CD) for each result is shown below the aligned point clouds.}
  \label{fig:modelnet-cpu-better}
\end{figure}

\begin{figure}[!h]
  \centering
  \includegraphics[width=\linewidth]{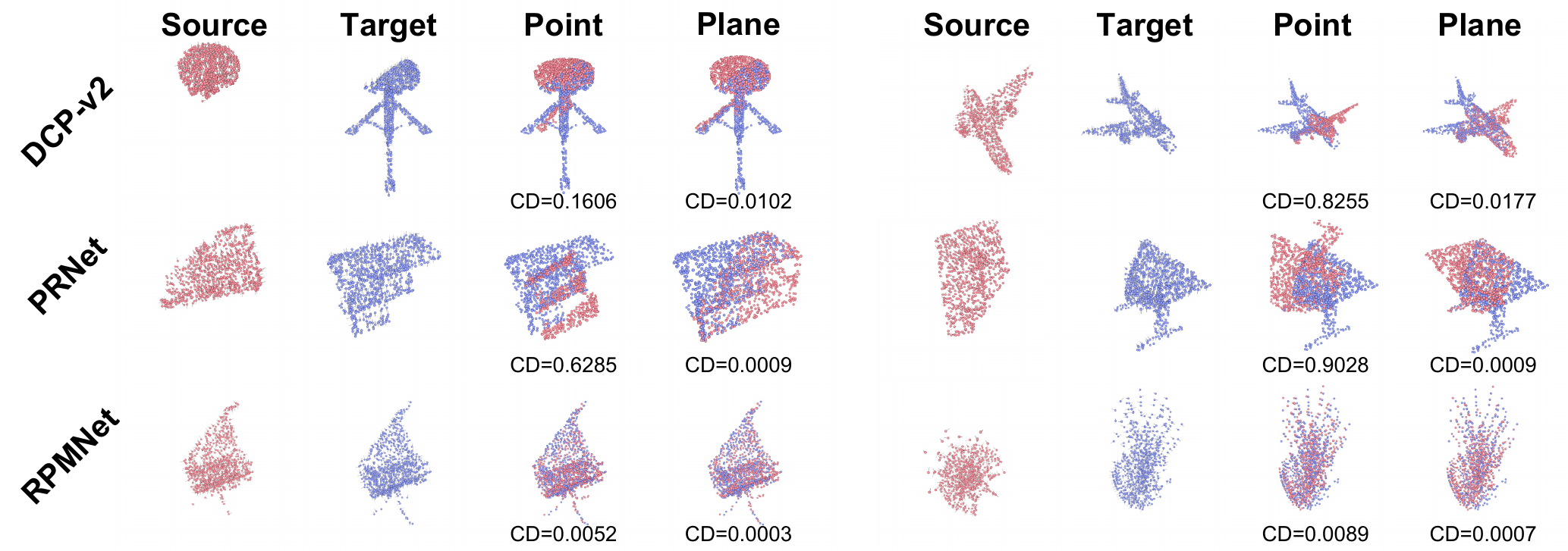}
  \caption{Selected point clouds from the original ModelNet40 without compositing, for which the point-to-plane registration performs better than the base point-to-point registration.}
  \label{fig:modelnet-better}
\end{figure}

We also show the typical failure cases on ModelNet40-CPU and ModelNet40 in \cref{fig:modelnet-cpu-worse,fig:modelnet-worse}, respectively. In these figures, we show the models for which the performances of three point-to-plane extensions are all worse than those of their base methods. What is common in the results for ModelNet40-CPU is that dense points exist at the overlapping region, and the normal vectors of points at almost the same position are inconsistent. In such a case, the neural network may match two points whose normal directions are different, and the subsequent point-to-plane registration can fail. On the other hand, the results for ModelNet40 in \cref{fig:modelnet-worse} demonstrate the limitation of our point-to-plane registration. As noted in the main text, the point-to-plane registration can suffer from rotational and translational ambiguities in the best rigid transformation when the variety of normal directions is insufficient. For example, point clouds in the first and third rows involve rotational ambiguity, while those in the second row involve translational ambiguity.

\begin{figure}[!h]
  \centering
  \includegraphics[width=\linewidth]{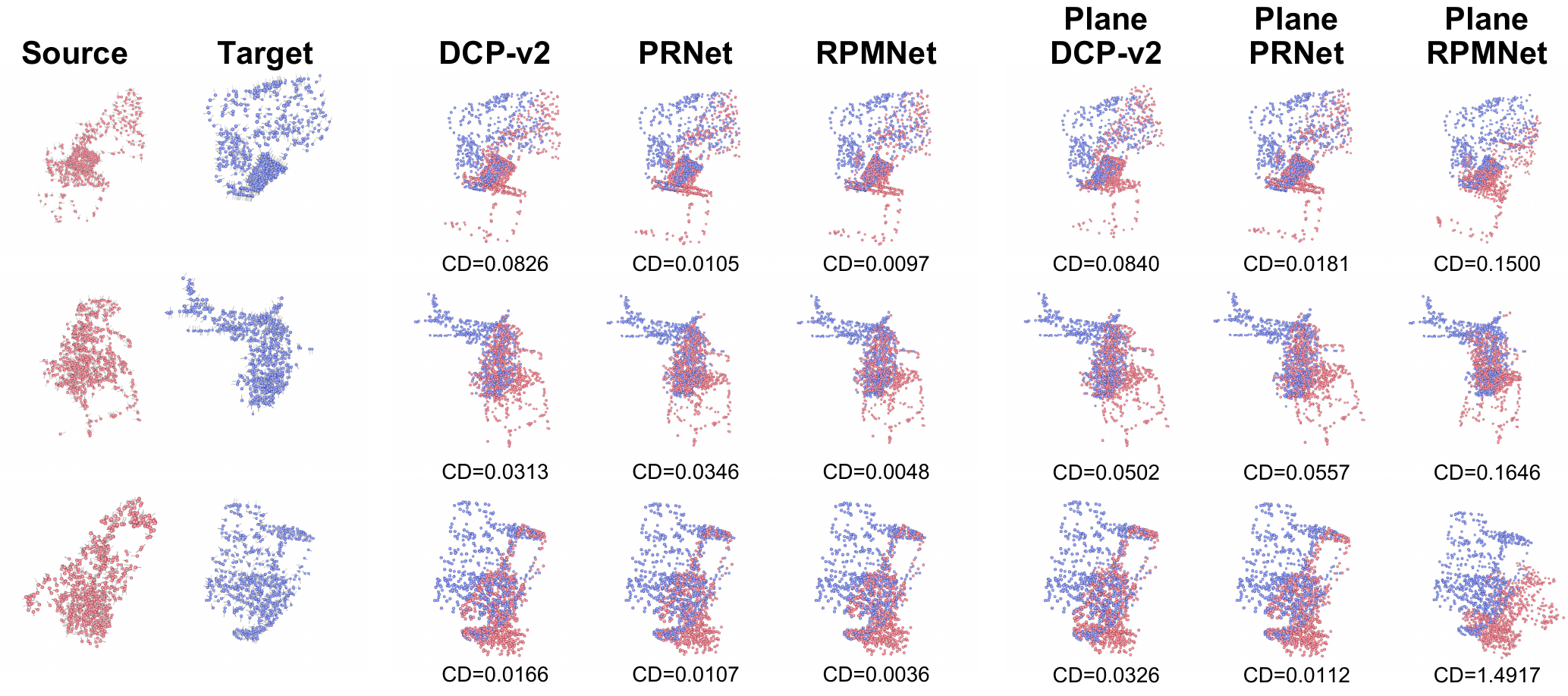}
  \caption{Typical failure cases on ModelNet40-CPU. In these cases, points at the overlapping region of two input point clouds are dense, and those with inconsistent normals are found at almost the same position.}
  \label{fig:modelnet-cpu-worse}
\end{figure}

\begin{figure}[!h]
  \centering
  \includegraphics[width=\linewidth]{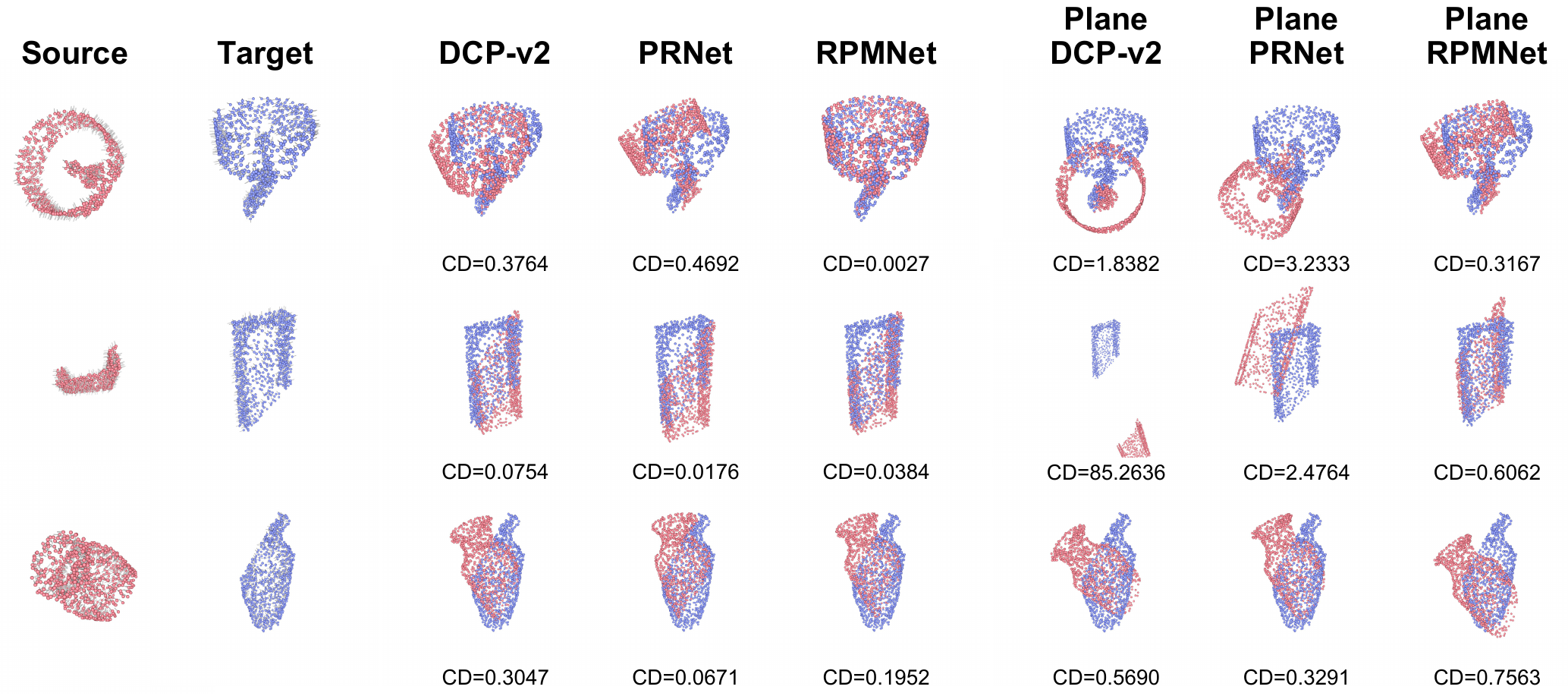}
  \caption{Typical failure cases on the original ModelNet40 without compositing. The results demonstrate a limitation of our point-to-plane extensions, namely that they do not work well for point clouds with rotational and translational ambiguity in the best rigid transformation.}
  \label{fig:modelnet-worse}
\end{figure}

\fi

\end{document}

\end{document}